\documentclass{article}

\PassOptionsToPackage{numbers, compress}{natbib}
\usepackage[preprint]{neurips_2026}

\usepackage[utf8]{inputenc} 
\usepackage[T1]{fontenc}    
\usepackage{hyperref}       
\usepackage{url}            
\usepackage{booktabs}       
\usepackage{amsfonts}       
\usepackage{nicefrac}       
\usepackage{microtype}      
\usepackage{xcolor}         
\usepackage{multirow}
\usepackage{enumitem}

\newcommand{\todo}[2][]{}
\usepackage{subcaption}
\usepackage{graphicx}
\usepackage{tcolorbox}

\definecolor{wikiRed}{RGB}{139, 0, 0}      
\definecolor{ubBlue}{HTML}{005BBB}
\definecolor{synOrange}{RGB}{204, 102, 0}  
\definecolor{kaustOrange}{HTML}{F68F1E}
\usepackage{hyperref}

\usepackage{algorithm}
\usepackage{algorithmic}

\usepackage{amsmath}
\usepackage{amssymb}
\usepackage{mathtools}
\usepackage{amsthm}
\usepackage{xcolor}
\usepackage{wrapfig}

\usepackage[capitalize,noabbrev]{cleveref}

\theoremstyle{plain}
\newtheorem{theorem}{Theorem}[section]

\newtheorem{lemma}[theorem]{Lemma}

\theoremstyle{definition}

\definecolor{plotblue}{RGB}{70,105,215}
\definecolor{plotlightblue}{RGB}{239,246,255}

\newtcolorbox{question}[1][]{colback=plotlightblue, colframe=ubBlue, boxrule=0.5pt, arc=2pt, #1}

\title{In-Run Data Shapley for Adam Optimizer}
\hypersetup{
  pdftitle={In-Run Data Shapley for Adam Optimizer},
  pdfauthor={Meng Ding, Zeqing Zhang, Di Wang, Lijie Hu}
}

\author{%
  Meng Ding$^{1,2}$\thanks{Equal contribution.} \quad
  Zeqing Zhang$^{1}$\thanks{Equal contribution.} \quad
  Di Wang$^{3}$ \quad
  Lijie Hu$^{1}$\thanks{Corresponding author: \texttt{lijie.hu@mbzuai.ac.ae}.} \\
  $^{1}$MBZUAI \quad $^{2}$UMass Boston \quad $^{3}$KAUST \\
}

\begin{document}

\maketitle


\begin{abstract}
Reliable data attribution is essential for understanding, debugging, and curating modern machine learning training data, with the Shapley value serving as a principled framework for data valuation. 
Recent In-Run Data Shapley methods avoid the prohibitive cost of retraining by decomposing data value into step-wise contributions along a single realized training trajectory. 
However, existing scalable in-run estimators are derived from the linear update structure of stochastic gradient descent (SGD), while modern deep learning pipelines are widely trained with adaptive optimizers such as Adam and AdamW. 
In this work, we show that data attribution is optimizer-dependent: SGD-induced and Adam-induced data values exhibit extremely low agreement in both magnitude and ranking, indicating that SGD-based attribution can lead to misleading rank-based decisions under Adam training. 
To address this mismatch, we propose \emph{Adam-Aware In-Run Data Shapley}, an optimizer-consistent attribution framework for Adam training. 
We define a fixed-state Adam local utility and derive a first-order approximation that explicitly accounts for momentum and coordinate-wise variance normalization. 
To make the estimator scalable, we introduce a \emph{Linearized Ghost Approximation}, which restores an additive dot-product form without materializing per-sample gradients. 
Experiments show that our method achieves near-perfect fidelity to exact local Shapley values under the fixed-state Adam utility ($R>0.99$), substantially improves efficiency over direct per-sample Adam-aware computation, and improves downstream rank-based attribution tasks including semantic source identification and data pruning.
\end{abstract}

\vspace{-0.1in}\section{Introduction} \vspace{-0.1in}

Modern machine learning models rely heavily on large-scale training data \cite{gao2020pile,raffel2020exploring}, yet their performance and behavior can be substantially shaped by the quality and influence of individual training examples. 
The lack of reliable data attribution mechanisms can lead to degraded performance and computational inefficiency. Without principled mechanisms to isolate and evaluate sample influence, trained models may amplify harmful biases or remain vulnerable to data poisoning attacks \cite{koh2017understanding,ghorbani2019data,DBLP:journals/corr/abs-2507-04059}. Furthermore, the presence of problematic data escalates the computational cost of modern training pipelines by wasting resources on uninformative or harmful examples \cite{paul2021deep,DBLP:journals/corr/abs-2501-15963,hu2025dissectingrepresentationmisalignmentcontrastive}.
Therefore, an actionable and reliable attribution method should quantify how each training example contributes to the specific model produced by the realized training pipeline.

To fairly attribute data contributions, the \textit{Shapley value} has emerged as the gold standard. Originating from cooperative game theory \cite{shapley1953value}, the Shapley value provides a unique, unbiased distribution of the total model performance among individual data points based on their marginal contributions, emerging as a widely adopted framework \cite{ghorbani2019data,jia2019towards}. While it has desirable properties, such as fairness, efficiency, and additivity \citep{Jia2019EfficientTD}, the classic calculation of Shapley values requires retraining models on many subsets of the training data, which is computationally infeasible for modern deep learning pipelines. Recent work on In-Run Data Shapley addresses this scalability barrier by decomposing data value into step-wise contributions along a single realized training trajectory~\citep{wang2025data}. This in-run perspective is especially appealing for modern models because it attributes data contribution to the specific training run that produced the deployed model, rather than to an average over many hypothetical retraining runs.

However, this in-run perspective also raises a natural question: are data values consistent across different optimizers? Existing In-Run Data Shapley is derived under the update rule of stochastic gradient descent (SGD), where each parameter update is a linear combination of per-sample gradients. This linearity reduces the local Shapley value to a gradient-gradient inner product and enables efficient ghost-style computation. In contrast, modern deep learning models are widely trained with adaptive optimizers such as Adam \cite{kingma2014adam} and AdamW \cite{loshchilov2017decoupled}, where the update direction is determined by historical momentum and coordinate-wise variance normalization. 
As a result, an SGD-based In-Run attribution score may not faithfully capture the contribution of a data point under Adam training, since it corresponds to different optimizer-induced training dynamics.

In this paper, we study optimizer-consistent In-Run Data Shapley for Adam training, with the goal of attributing data contribution along the realized Adam optimization trajectory while preserving the scalability of in-run attribution. Our contributions are summarized as follows:

\noindent\textbf{1. Optimizer-Aware Data Attribution.}  
    We demonstrate that data value is not an intrinsic property of the dataset, but is fundamentally coupled to the optimization trajectory. 
    Using retraining-based Truncated Monte Carlo (TMC) Shapley as a model-agnostic reference, we show that SGD-induced and Adam-induced data values exhibit extremely low agreement in both magnitude and ranking (Pearson $R=0.0579$, Spearman $\rho=0.0465$), indicating that SGD-based attribution can lead to misleading rank-based decisions under Adam training.

\noindent\textbf{2. Adam-Aware In-Run Data Shapley.} 
    We derive the first Adam-aware In-Run Data Shapley estimator. 
    By defining a fixed-state local utility under Adam and applying a first-order approximation to the Adam update map, our formulation explicitly accounts for momentum and coordinate-wise variance normalization along the realized Adam trajectory.

\noindent\textbf{3. Scalable ``Linearized Ghost'' Computation.} 
    To overcome the nonlinearity of Adam updates, which prevents standard efficient aggregation, we introduce the Linearized Ghost Approximation. 
    This technique restores an additive structure in the local utility, allowing Adam-aware attribution to be computed through ghost-style dot-products without materializing per-sample gradients. In our GPT-2 Small benchmark, it improves throughput by $2.24\times$ and reduces peak memory by $69.3\%$ compared with a naive direct implementation.

\noindent\textbf{4. Fidelity and Practical Utility.} 
    We show that our method achieves near-perfect fidelity to explicit Adam marginal utility changes ($R>0.99$), substantially outperforming SGD-based proxies under Adam dynamics. 
    We further demonstrate its usefulness in downstream rank-based attribution tasks, including semantic source identification and data pruning.

\vspace{-0.1in}\section{Related Work}\vspace{-0.1in}
\label{sec:related_work}

\noindent\textbf{Data Attribution Methods.}
Data attribution quantifies the influence of individual data points on model predictions. 
A foundational approach uses \emph{influence functions}, originating from statistics~\citep{cook1980characterizations}. 
In deep learning, \citet{koh2017understanding} apply first-order Taylor expansions and inverse Hessian-vector products to estimate the effect of training samples. 
Several works improve the scalability of influence-function computation. 
For example, LiSSA provides a stochastic approximation to inverse Hessian-vector products~\citep{agarwal2017second}, while recent methods such as DataInf and HyperINF further accelerate influence estimation for large-scale models and parameter-efficient fine-tuning settings~\citep{kwon2023datainf,zhou2024hyperinf}. 
Influence-function methods have also been extended to study large language models~\citep{grosse2023studying}, but they can be unstable in non-convex settings~\citep{basu2020influence}. 
Trajectory-based methods provide another line of data attribution. 
For example, TracIn~\citep{pruthi2020estimating} avoids explicit Hessian inversion by accumulating gradient dot-products along the training trajectory. 
A related line of work differentiates through the training process, including approximate unrolling and metagradient-based attribution~\citep{bae2024training,ilyas2025magic}. 
These methods estimate how a data perturbation propagates through future optimization steps and affects the final trained model. 
Other methods take different perspectives on data value. 
Datamodels~\citep{ilyas2022datamodels} learn mappings from training subsets to model outputs at a higher computational cost. 
DAVINZ~\citep{wu2022davinz} estimates data value at initialization, while validation-free methods remove dependence on a held-out validation set~\citep{xu2021validation}. 
In-Run Data Shapley takes a different perspective: it defines a step-wise local utility along the realized training trajectory and aggregates local Shapley values across iterations~\citep{wang2025data}. 
Our work follows this in-run perspective and addresses a different question: how to make the step-wise Shapley utility consistent with the optimizer that actually generated the trajectory. 

\noindent\textbf{Data Shapley Value.}
To quantify data value, prior work adopts the Shapley value from cooperative game theory~\citep{shapley1953value}. In machine learning, Data Shapley~\citep{ghorbani2019data} treats training samples as players and model performance as the payoff, but exact computation is NP-hard, motivating efficient approximations.
The most common approach is Monte Carlo Shapley, which estimates values by averaging marginal contributions over random permutations~\citep{mitchell2022sampling}. Early work combined this estimator with truncation techniques~\citep{jia2019towards, wang2023note}, while later studies introduced more advanced sampling strategies to improve efficiency and reduce variance, including ergodic sampling~\citep{illes2019estimation}, stratified empirical Bernstein sampling~\citep{burgess2021approximating}, and multilinear sampling schemes~\citep{okhrati2021multilinear}. Recent work further improves scalability through randomized experimental designs~\citep{lin2022measuring} and stochastic amortization~\citep{covert2024stochastic}.
Beyond efficiency, several extensions aim to improve robustness and interpretability. Beta Shapley~\citep{kwon2021beta} emphasizes low-cardinality coalitions, while Data Banzhaf~\citep{wang2023data} and Weighted Banzhaf~\citep{li2023robust} provide alternative influence measures based on the Banzhaf power index.


Despite these advances, many existing approaches still rely on retraining surrogate models or computationally intensive sampling procedures. The recently proposed \emph{In-Run Data Shapley}~\citep{wang2025data} avoids retraining by decomposing the Shapley value into per-iteration training updates. However, its theoretical derivation critically depends on the linearity of stochastic gradient descent, limiting its applicability to stateful and non-linear optimizers such as Adam. Addressing this gap is the focus of the present work.

\vspace{-0.1in}\section{Background}\vspace{-0.1in}

In this section, we introduce the formal setup for data attribution and review the Shapley value. 
We then describe the In-Run Data Shapley framework, which decomposes data value along a single realized training trajectory. 
Finally, we revisit the SGD and Adam update rules, highlighting the key structural difference that motivates our Adam-aware formulation.

\noindent\textbf{Problem Setup.}
Consider a training dataset $D=\{z_i\}_{i=1}^N$ consisting of $N$ individual data points. In data attribution, our goal is to assign each training point $z\in D$ a scalar value that measures its contribution to the performance or behavior of a trained model. Let $U(\cdot)$ denote a utility function that maps a subset of training data $S\subseteq D$ to a real-valued score. For example, $U(S)$ can be the validation accuracy, negative validation loss, or another task-specific performance measure of a model trained on $S$. 
Given a utility function $U$, a data attribution method assigns each point $z\in D$ a scalar score, denoted by $\phi_z(U)$, to quantify its contribution to the chosen utility.

\noindent\textbf{Data Shapley Value.}
The Shapley value, originating from cooperative game theory~\citep{shapley1953value}, provides a principled way to distribute the total utility among individual data points. 
In this formulation, each data point is treated as a player, and the model utility is treated as the payoff of the coalition. 
For any data point $z\in D$, its Data Shapley value with respect to utility $U$ is defined as
{\small{\[
\phi_z(U)
:=
\frac{1}{N}
\sum_{k=1}^{N}
\binom{N-1}{k-1}^{-1}
\sum_{\substack{S\subseteq D\setminus\{z\}\\ |S|=k-1}}
\left[
U(S\cup\{z\})-U(S)
\right],
\]}}
where $D_{-z} = D \setminus \{z\}$ represents the dataset with point $z$ excluded and $S$ indicates a subset of $D_{-z}$ with cardinality $|S| = k-1$. The term $[U(S \cup \{z\}) - U(S)]$ measures the marginal contribution of data point $z$ when added to the existing subset $S$.
The popularity of the Shapley value stems from the fact that it is the unique notion of data value satisfying four axioms: Null Player, Symmetry, Linearity, and Efficiency. Here, we introduce the Linearity, which will be used in subsequent analysis.

\begin{lemma}[Linearity of the Shapley value]
For any two utility functions $U_1,U_2$ and any $\alpha_1,\alpha_2\in\mathbb{R}$, $\phi_z(\alpha_1 U_1+\alpha_2 U_2) = \alpha_1\phi_z(U_1)+\alpha_2\phi_z(U_2).$
\end{lemma}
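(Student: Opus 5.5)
The plan is to verify the identity directly from the explicit formula for $\phi_z(U)$ given above. The key observation is that, for fixed $D$ and $z$, the map $U\mapsto\phi_z(U)$ is a finite linear combination of evaluations $U(T)$ for subsets $T\subseteq D$. The coefficients are the weights $\frac{1}{N}\binom{N-1}{k-1}^{-1}$, which depend only on $N$ and $|S|$, never on $U$. A finite linear combination of point evaluations is linear, so the statement should follow almost immediately.

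First, I would fix the combined utility $U:=\alpha_1U_1+\alpha_2U_2$, understood pointwise: $U(S)=\alpha_1U_1(S)+\alpha_2U_2(S)$ for every $S\subseteq D$. For each $k$ and each $S\subseteq D\setminus\{z\}$ with $|S|=k-1$, I would write the marginal contribution as
\[
U(S\cup\{z\})-U(S)=\alpha_1\bigl[U_1(S\cup\{z\})-U_1(S)\bigr]+\alpha_2\bigl[U_2(S\cup\{z\})-U_2(S)\bigr].
\]
This follows from the pointwise definition and a rearrangement of terms.

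Next, I would substitute this into the double sum defining $\phi_z(U)$. Then I would use linearity of finite sums, together with the fact that the weight $\frac{1}{N}\binom{N-1}{k-1}^{-1}$ is a constant independent of the utility, to split the expression into $\alpha_1$ times the defining sum for $\phi_z(U_1)$ plus $\alpha_2$ times the defining sum for $\phi_z(U_2)$. That gives $\alpha_1\phi_z(U_1)+\alpha_2\phi_z(U_2)$ as required.

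There is no real obstacle here. The only point needing care is to state explicitly that the linear combination of utilities is defined pointwise on subsets, and that the Shapley weights do not involve $U$. Since all sums are finite (over $k\le N$ and over finitely many subsets), no convergence issues arise.
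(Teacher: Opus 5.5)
Your argument is correct and complete. With $\alpha_1U_1+\alpha_2U_2$ taken pointwise on subsets of $D$, each marginal contribution splits linearly. The weights $\frac{1}{N}\binom{N-1}{k-1}^{-1}$ do not depend on the utility and all sums are finite, so the defining expression for $\phi_z$ splits accordingly.

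The paper gives no proof of this lemma. It lists linearity among the four standard Shapley axioms and takes it as known, so your direct verification from the explicit formula is the standard justification behind that citation.

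One point is worth adding for how the paper later uses the lemma to write $\phi_z(U)=\sum_t\phi_z(U^{(t)})$. Each local utility $U^{(t)}$ is defined only on subsets of $B_t$, so it must first be regarded as a game on $D$ via $S\mapsto U^{(t)}(S\cap B_t)$. Then all utilities share the same player set, as your argument presumes.
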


\noindent\textbf{In-Run Data Shapley.}
Classical Data Shapley is computationally expensive because it requires retraining models on many data subsets. In-Run Data Shapley avoids repeated retraining by attributing data contribution along a single realized training trajectory~\citep{wang2025data}.Consider a training run with checkpoints $\{w_t\}_{t=0}^{T}$, where $w_t$ denotes the model parameter at iteration $t$. 
At iteration $t$, let $B_t$ be the sampled mini-batch and let $z^{(\mathrm{val})}$ be a validation example. 
For any $S\subseteq B_t$, the \textit{local utility} is defined as the validation loss change induced by updating with $S$:
$U^{(t)}(S;z^{(\mathrm{val})})
:=
\ell(\widetilde w_{t+1}(S),z^{(\mathrm{val})})
-
\ell(w_t,z^{(\mathrm{val})}),$
where $\widetilde w_{t+1}(S)$ denotes the counterfactual next parameter obtained from the current state $w_t$ using the subset $S$. 
The global in-run utility is then defined by accumulating the local utilities over the training trajectory $U(S;z^{(\mathrm{val})}) =\sum_{t=0}^{T-1} U^{(t)}(S;z^{(\mathrm{val})}).$
By the linearity of the Shapley value, the corresponding In-Run Data Shapley value can be decomposed as $\phi_z(U) = \sum_{t=0}^{T-1} \phi_z\!\left(U^{(t)}\right),$
where $\phi_z(U^{(t)})=0$ if $z\notin B_t$. 
Thus, In-Run Data Shapley measures the cumulative contribution of a data point across the iterations in which it appears. 

\noindent\textbf{SGD Local Utility and Ghost Computation.}
Existing In-Run Data Shapley is derived under the update rule of stochastic gradient descent. 
For a subset $S\subseteq B_t$, the counterfactual SGD update is $\widetilde w_{t+1}(S)
= w_t-\eta_t\sum_{z\in S}\nabla_w \ell(w_t,z),$ where $\eta_t$ is the learning rate. 
Applying a first-order Taylor expansion of the validation loss around $w_t$ gives
{\small{\[
U_{\mathrm{SGD}}^{(t)}(S;z^{(\mathrm{val})})
\approx
-\eta_t
\left\langle
\nabla_w\ell(w_t,z^{(\mathrm{val})}),
\sum_{z\in S}\nabla_w\ell(w_t,z)
\right\rangle.
\]}}
This expression is additive over samples in $S$. 
Therefore, for any $z\in B_t$, the local Shapley value reduces to a gradient-gradient inner product $\phi_z(U_{\mathrm{SGD}}^{(t)}) \approx -\eta_t \langle \nabla_w\ell(w_t,z^{(\mathrm{val})}), \nabla_w\ell(w_t,z) \rangle,$ which enables efficient ghost-style computation without materializing per-sample gradients (more discussions in Section~\ref{sec:ghost_tech}).

\noindent\textbf{Adam Optimizer.}
Adam \cite{kingma2014adam} is a widely used adaptive optimizer that maintains exponential moving averages of both first-order ($m_t$) and second-order ($v_t$) gradient moments. At each step $t$, the update is given by:
\[
m_t(S)
= \beta_1 m_{t-1}+(1-\beta_1)g_t(S),\quad
v_t(S) = \beta_2 v_{t-1}+(1-\beta_2)g_t(S)^{\odot 2},
\]
where $g_t(S) = \nabla_w \ell(w_{t}; S)$ is the gradient of the training loss on the coalition $S$ and $\odot$ denotes element-wise multiplication. For clarity, we omit bias-correction terms or absorb them into the effective learning rate.  
The corresponding counterfactual Adam update is $\widetilde w_{t+1}(S)
= w_t - \eta_t \frac{m_t(S)}{\sqrt{v_t(S)}+\epsilon},$ where $\eta_t$ is the learning rate and $\epsilon$ is a small constant for numerical stability. Compared with SGD, Adam is stateful and nonlinear in the coalition gradient $g_t(S)$ due to historical moment estimates and coordinate-wise variance normalization. 
Therefore, the local utility under Adam no longer reduces to an additive sum of per-sample gradient inner products, preventing a direct application of standard In-Run Data Shapley and ghost dot-product computation.

\vspace{-0.1in}\section{In-Run Data Shapley for Adam Optimizer}\label{sec:method}\vspace{-0.1in}

In this section, we first empirically verify that data values are optimizer-dependent. 
We then develop an Adam-aware In-Run Data Shapley formulation and introduce an efficient Linearized Ghost computation for Adam-based training.

\vspace{-0.1in}\subsection{Optimizer Dependence of Data Value}
\label{sec:optimizer_dependence}
\begin{wrapfigure}{r}{0.45\linewidth}
    \centering
    \includegraphics[width=0.9\linewidth]{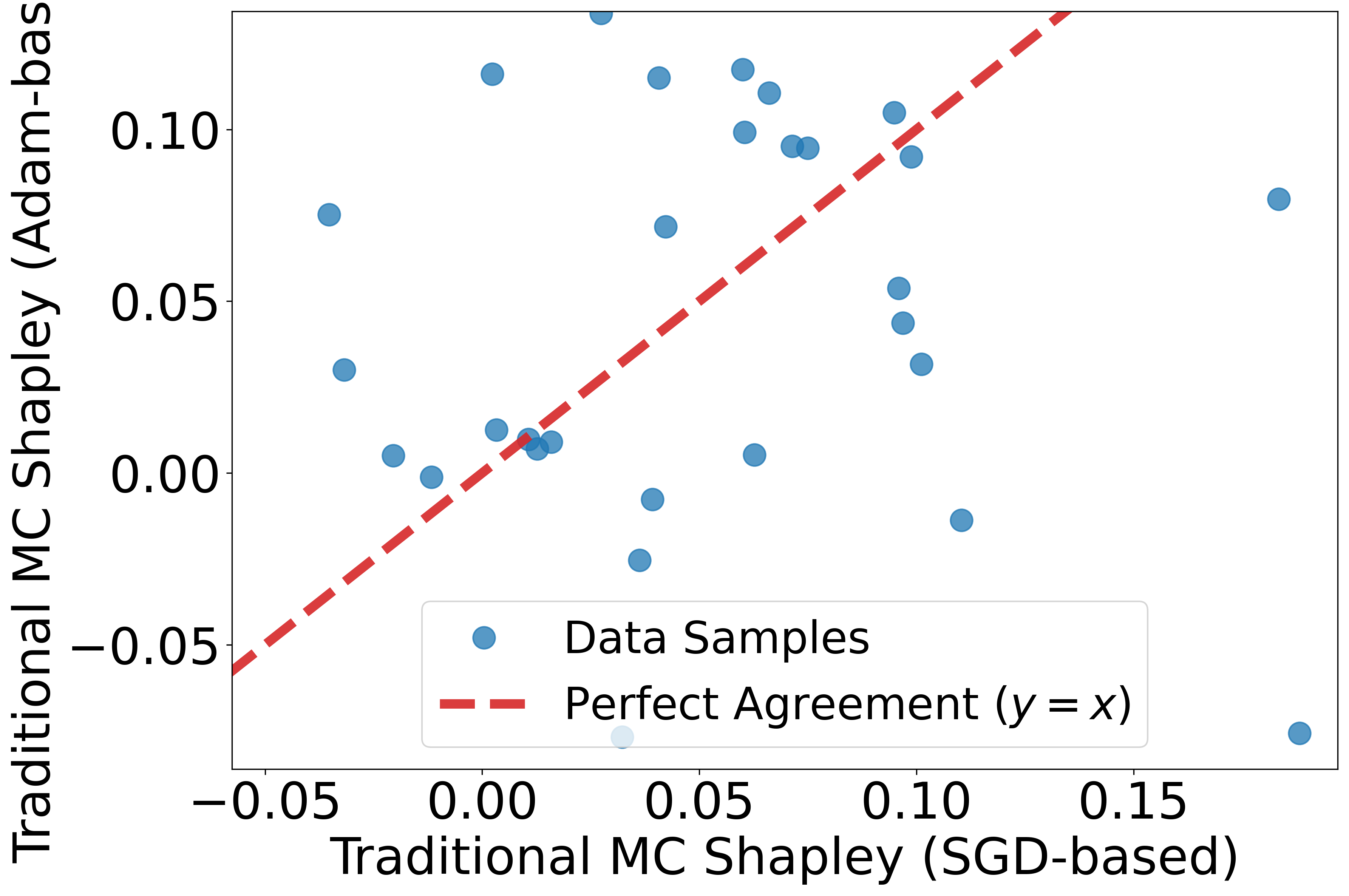}
    \caption{
    Comparison between SGD-based and Adam-based Data Shapley values
    (Pearson $R = 0.0579$, Spearman $\rho = 0.0465$).
    }
    \label{fig:sgd_vs_adam}
    \vspace{-1.2em}
\end{wrapfigure}
We begin by investigating a fundamental question: Are data Shapley values consistent across different optimization algorithms? To answer this, we computed the Shapley values for the same dataset and model architecture trained under two distinct regimes: SGD and Adam.
We use retraining-based Truncated Monte Carlo Shapley~\cite{ghorbani2019data} as a model-agnostic reference. TMC estimates the Shapley value by sampling random permutations of the training data and averaging the marginal utility change induced by each sample. Although computationally expensive, this retraining-based estimator allows us to compare how the choice of optimizer changes the induced data values without relying on our proposed approximation. (More experimental setup details refer to Appendix~\ref{sec:append_setup_opt}.)

As illustrated in Figure~\ref{fig:sgd_vs_adam}, attribution scores derived under the SGD trajectory diverge significantly from those computed under Adam. 
The scatter plot reveals a clear lack of agreement, with many samples deviating substantially from the identity line ($y=x$). 
Quantitatively, the Pearson correlation between SGD-based and Adam-based values is extremely low ($R=0.0579$), showing that the two optimizers assign substantially different attribution magnitudes. 
Moreover, the Spearman correlation is also very low ($\rho=0.0465$), indicating that the discrepancy is not merely a rescaling of value scores, but also changes the ranking of training samples. 
This inconsistency demonstrates that data value is not a static property intrinsic to the sample, but is fundamentally coupled to the optimization dynamics. 
Mechanistically, a sample that is highly influential under the linear updates of SGD may contribute differently under Adam, where the update direction is modulated by historical momentum and adaptive variance scaling. 
This suggests that relying on SGD-based attribution for adaptive optimizers can be misleading, motivating the need for an optimizer-aware framework.
In Section~\ref{sec:exp}, we further compare the proposed Adam-aware In-Run Shapley estimator with the standard SGD-based In-Run proxy. 
The results show that accounting for Adam dynamics substantially improves fidelity to explicit Adam marginal utility changes.


\vspace{-0.1in}\subsection{In-Run Data Shapley for Adam}

We now extend the in-run attribution framework to Adam. 
Under SGD, the local utility is additive over the coalition $S\subseteq B_t$, and the local Shapley value reduces to a gradient-gradient inner product:
\[
\phi_z(U_{\mathrm{SGD}}^{(t)})
\approx
-\eta_t
\left\langle
\nabla \ell(w_t,z^{(\mathrm{val})}),
\nabla \ell(w_t,z)
\right\rangle .
\]
This additive structure is the key reason why existing In-Run Data Shapley can be computed efficiently using ghost-style dot-products. 
However, Adam introduces optimizer states and adaptive variance normalization, so the update induced by a coalition is no longer a linear sum of per-sample gradients.

At iteration $t$, we condition on the realized Adam state $(w_t,m_{t-1},v_{t-1})$. 
For a coalition $S\subseteq B_t$, let $g_t(S):=\sum_{z\in S}\nabla \ell(w_t,z)$
denote the coalition gradient. Under the fixed-state local counterfactual, the previous moments $(m_{t-1},v_{t-1})$ are held fixed, while the current gradient is replaced by $g_t(S)$. 
Thus,
\[
m_t(S)=\beta_1m_{t-1}+(1-\beta_1)g_t(S),
\qquad
v_t(S)=\beta_2v_{t-1}+(1-\beta_2)g_t(S)^{\odot 2},
\]
and the corresponding counterfactual Adam update is $\widetilde w_{t+1}(S)
= w_t-\eta_t \frac{m_t(S)}{\sqrt{v_t(S)}+\epsilon}.$
We define the Adam local utility by the validation loss change after this counterfactual update:
$U_{\mathrm{Adam}}^{(t)}(S;z^{(\mathrm{val})})
:=
\ell(\widetilde w_{t+1}(S),z^{(\mathrm{val})})
-
\ell(w_t,z^{(\mathrm{val})}).$
Applying a first-order Taylor expansion of the validation loss around $w_t$ gives the following local approximation.

\begin{lemma}\label{lem:adam_sv}
For any iteration $t=0,\ldots,T-1$, under the fixed-state Adam local counterfactual, the first-order approximation of the Adam local utility is
{\small{\[
U_{\mathrm{Adam}}^{(t)}(S;z^{(\mathrm{val})})
\approx
-\eta_t
\left\langle
\nabla \ell(w_t,z^{(\mathrm{val})}),
\frac{m_t(S)}{\sqrt{v_t(S)}+\epsilon}
\right\rangle .
\]}}
\end{lemma}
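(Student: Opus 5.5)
The argument is a first-order Taylor expansion of the validation loss around the current iterate $w_t$, evaluated at the counterfactual Adam point $\widetilde w_{t+1}(S)$. The fixed-state counterfactual makes this possible: $(w_t,m_{t-1},v_{t-1})$ are held fixed, so the displacement depends on $S$ only through $g_t(S)$. Define
\[
\Delta_t(S):=\widetilde w_{t+1}(S)-w_t=-\eta_t\,\frac{m_t(S)}{\sqrt{v_t(S)}+\epsilon}.
\]
This is a deterministic, explicitly computable vector. The expansion must be taken in the parameter displacement $\Delta_t(S)$, not in the gradient $g_t(S)$, because $\Delta_t(S)$ is nonlinear in $g_t(S)$. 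Doing so means the nonlinearity of Adam never needs to be linearized at this stage.

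\textbf{Step 1.} Assume $\ell(\cdot,z^{(\mathrm{val})})$ is twice continuously differentiable near $w_t$ (or has $L$-Lipschitz gradient). Taylor's theorem then gives
\[
\ell(w_t+\Delta_t(S),z^{(\mathrm{val})})=\ell(w_t,z^{(\mathrm{val})})+\langle\nabla\ell(w_t,z^{(\mathrm{val})}),\Delta_t(S)\rangle+R_t(S),
\]
with $|R_t(S)|\le \tfrac{L}{2}\|\Delta_t(S)\|^2$.

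\textbf{Step 2.} Subtract $\ell(w_t,z^{(\mathrm{val})})$ and substitute the expression for $\Delta_t(S)$. This yields exactly the displayed inner product $-\eta_t\langle\nabla\ell(w_t,z^{(\mathrm{val})}),m_t(S)/(\sqrt{v_t(S)}+\epsilon)\rangle$, plus $R_t(S)$.

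\textbf{Step 3.} Justify dropping $R_t(S)$, i.e.\ show the remainder is $O(\eta_t^2)$. Each coordinate of the normalized direction $m_t(S)/(\sqrt{v_t(S)}+\epsilon)$ is bounded uniformly in $S$:
\[
\left|\frac{m_{t,i}(S)}{\sqrt{v_{t,i}(S)}+\epsilon}\right|\le C,
\]
where $C$ depends only on $\beta_1,\beta_2$ and on bounds for $m_{t-1}$, $v_{t-1}$ and the per-sample gradients. In particular, $|m|/\sqrt{v}$ is bounded by a constant depending on $\beta_1/\sqrt{1-\beta_2}$ when the moments come from bounded gradients. Therefore $\|\Delta_t(S)\|\le \eta_t C\sqrt{d}$, so $R_t(S)=O(\eta_t^2)$ uniformly over coalitions $S\subseteq B_t$. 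The linear term is $O(\eta_t)$, so the stated expression is the first-order approximation in the learning rate.

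\textbf{Main obstacle.} The only delicate point is making this uniform bound rigorous, since the ratio can be large where $v_t$ is small. I would first try the crude bound using $\epsilon>0$:
\[
\left|\frac{m_{t,i}(S)}{\sqrt{v_{t,i}(S)}+\epsilon}\right|\le \frac{|m_{t,i}(S)|}{\epsilon},
\]
with $|m_{t,i}(S)|$ bounded via bounded gradients. This gives a valid but $\epsilon$-dependent constant. Because $B_t$ is finite and the set of coalitions is finite, uniformity over $S$ is automatic. No Shapley-specific argument is needed for the lemma itself; the additivity issue is deferred to the subsequent Linearized Ghost step.
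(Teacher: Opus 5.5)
Your proposal is correct and follows the paper's proof: Taylor-expand the validation loss at $w_t$, substitute the fixed-state displacement $\widetilde w_{t+1}(S)-w_t=-\eta_t\, m_t(S)/(\sqrt{v_t(S)}+\epsilon)$, and drop the remainder $R_t(S)$. Your Step 3 goes further than the paper, which simply discards $R_t(S)$ as higher order without bounding it; your $O(\eta_t^2)$ bound is valid and uniform over the finitely many $S\subseteq B_t$ because $\epsilon>0$.
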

Lemma~\ref{lem:adam_sv} shows that, under Adam, the local utility depends on the optimizer-induced update direction rather than the raw coalition gradient. 
This is fundamentally different from SGD: due to the variance term $v_t(S)$, the Adam update is nonlinear in the coalition gradient $g_t(S)$. 
As a result, the local utility is no longer additive over samples in $S$, and the corresponding Shapley value cannot be reduced to a standard gradient-gradient inner product. Therefore, the efficient ghost dot-product computation used in prior In-Run Data Shapley cannot be directly applied, which motivates the Linearized Ghost Approximation introduced next.

\vspace{-0.1in}\subsection{Efficient Computation via Linearized Ghost Dot-Product}
\label{sec:ghost_tech}

A major advantage of prior In-Run Data Shapley~\cite{wang2025data} is that, under SGD, the local contribution reduces to a gradient-gradient inner product. 
This structure enables the Ghost Dot-Product technique, which computes pairwise inner products between per-sample gradients and validation gradients without explicitly materializing full per-sample gradients.

Specifically, for a given layer $l$, let $a_i^{(l)}$ denote the input activation vector and $\delta_i^{(l)}$ denote the backpropagated error vector for sample $z_i$. 
The gradient of the weight matrix $W^{(l)}$ with respect to $z_i$ can be written as the outer product
$
g_i^{(l)}=\delta_i^{(l)}(a_i^{(l)})^\top.
$
Thus, the gradient dot-product between a training sample $z_i$ and a validation sample $z^{(\mathrm{val})}$ can be computed layer-wise as
\begin{equation*}
\begin{aligned}
\nabla \ell(w_t,z_i)^\top \nabla \ell(w_t,z^{(\mathrm{val})})
&=
\sum_l
\left\langle
\delta_i^{(l)}(a_i^{(l)})^\top,
\delta_{\mathrm{val}}^{(l)}(a_{\mathrm{val}}^{(l)})^\top
\right\rangle_F  =
\sum_l
\underbrace{(\delta_i^{(l)\top}\delta_{\mathrm{val}}^{(l)})}_{\text{error correlation}}
\cdot
\underbrace{(a_i^{(l)\top}a_{\mathrm{val}}^{(l)})}_{\text{activation correlation}}.
\end{aligned}
\end{equation*}
Therefore, the required dot-products can be obtained from activations and backpropagated errors, without instantiating per-sample gradients.
However, this standard ghost computation cannot be directly applied to Adam. 
From Lemma~\ref{lem:adam_sv}, the Adam local utility depends on the nonlinear update map $\Psi_t(g)
:=
\frac{\beta_1 m_{t-1}+(1-\beta_1)g}
{\sqrt{\beta_2 v_{t-1}+(1-\beta_2)g^{\odot 2}}+\epsilon},$
evaluated at the coalition gradient $g_t(S)$. 
Thus,
\[
U_{\mathrm{Adam}}^{(t)}(S;z^{(\mathrm{val})})
\approx
-\eta_t
\left\langle
\nabla \ell(w_t,z^{(\mathrm{val})}),
\Psi_t(g_t(S))
\right\rangle .
\]
Since $\Psi_t(\cdot)$ is nonlinear in $g_t(S)$, the marginal contribution of adding a sample no longer takes the standard gradient-gradient form.

To address this challenge, we introduce the \textbf{{Linearized Ghost Approximation}}. 
The key idea is to linearize $\Psi_t(\cdot)$ around a fixed reference gradient $\bar g_t$, which we take as the observed batch gradient $\Psi_t(g)
\approx \Psi_t(\bar g_t)+J_t(g-\bar g_t), J_t:=\left.\frac{\partial \Psi_t(g)}{\partial g}\right|_{g=\bar g_t}.$ 
Substituting this approximation into the Adam local utility gives an affine function of the coalition gradient $g_t(S)$. 
The terms independent of $S$ vanish under the Shapley value, and the remaining term restores an additive structure over samples in $S$. 
Consequently, the local contribution can again be written as a bilinear form between an Adam-aware validation direction and a per-sample gradient, making ghost-style computation applicable.
\begin{theorem}\label{thm:adam_sv_approx}
Let $\bar g_t$ be a fixed reference gradient and 
$J_t=\left.\frac{\partial \Psi_t(g)}{\partial g}\right|_{g=\bar g_t}$.
Under the Linearized Ghost Approximation $\Psi_t(g_t(S))
\approx \Psi_t(\bar g_t)+J_t (g_t(S)-\bar g_t),$ the resulting linearized Adam local utility satisfies, for any $z\in B_t$, $\phi_z (\widetilde U_{\mathrm{Adam}}^{(t)})
= -\eta_t \langle \nabla \ell(w_t,z^{(\mathrm{val})}), J_t\nabla \ell(w_t,z) \rangle,$
where $\widetilde U_{\mathrm{Adam}}^{(t)}$ denotes the local utility obtained by substituting the above linearization into $U_{\mathrm{Adam}}^{(t)}$.
Therefore, the global Adam-aware In-Run Data Shapley value under the linearized utility is
{\[
\phi_z(\widetilde U_{\mathrm{Adam}})
=
-\sum_{t:z\in B_t}
\eta_t
\left\langle
\nabla \ell(w_t,z^{(\mathrm{val})}),
J_t\nabla \ell(w_t,z)
\right\rangle .
\]}
\end{theorem}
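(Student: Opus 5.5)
Under the Linearized Ghost Approximation, the linearized local utility $\widetilde U_{\mathrm{Adam}}^{(t)}$ turns out to be an affine function of the coalition gradient. Because the coalition gradient is additive over samples, this utility is a constant plus an additive game. Its Shapley value can then be read off from Linearity together with two elementary facts: constants contribute nothing, and additive games pay each player its own term. The global statement follows by summing over iterations, as in the In-Run decomposition recalled in the Background.

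\textbf{Step 1 (expand the utility).} Substituting the linearization into Lemma~\ref{lem:adam_sv} and using $g_t(S)=\sum_{z\in S}\nabla\ell(w_t,z)$ gives
\[
\widetilde U_{\mathrm{Adam}}^{(t)}(S)= C_t + \sum_{z\in S} c_t(z),
\]
where the two pieces are
\[
C_t:=-\eta_t\langle \nabla\ell(w_t,z^{(\mathrm{val})}),\Psi_t(\bar g_t)-J_t\bar g_t\rangle,
\qquad
c_t(z):=-\eta_t\langle \nabla\ell(w_t,z^{(\mathrm{val})}),J_t\nabla\ell(w_t,z)\rangle.
\]
This uses only linearity of the inner product and of the matrix $J_t$. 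The key point is that $\bar g_t$ and $J_t$ are fixed, so they do not depend on $S$.

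\textbf{Step 2 (Shapley value of each piece).} By the Linearity lemma, $\phi_z(\widetilde U_{\mathrm{Adam}}^{(t)})=\phi_z(C_t)+\sum_{z'\in B_t}\phi_z(U_{z'})$, where $U_{z'}(S):=c_t(z')\mathbf 1[z'\in S]$. I then treat each piece directly from the Shapley formula with player set $B_t$.
\begin{itemize}
\item For the constant game $C_t$, every marginal $U(S\cup\{z\})-U(S)$ is $0$, so $\phi_z(C_t)=0$.
\item For $U_{z'}$ with $z'\neq z$, the marginal is again $0$, because adding $z$ does not change whether $z'\in S$.
\item For $U_z$, the marginal is exactly $c_t(z)$ for every $S\subseteq B_t\setminus\{z\}$. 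Since the weights $\frac1N\binom{N-1}{k-1}^{-1}$, summed over all subsets of each size, total $1$, we get $\phi_z(U_z)=c_t(z)$.
\end{itemize}
This gives the local identity $\phi_z(\widetilde U_{\mathrm{Adam}}^{(t)})=c_t(z)$. The identity is exact for the linearized utility; the only approximations are the ones already built into its definition.

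\textbf{Step 3 (globalize).} Define $\widetilde U_{\mathrm{Adam}}:=\sum_t \widetilde U_{\mathrm{Adam}}^{(t)}$. Linearity again gives $\phi_z(\widetilde U_{\mathrm{Adam}})=\sum_t\phi_z(\widetilde U_{\mathrm{Adam}}^{(t)})$. For $z\notin B_t$ the local game at step $t$ does not involve $z$ (it is a null player there), so those terms vanish. Summing $c_t(z)$ over the remaining steps $t$ with $z\in B_t$ yields the stated formula.

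\textbf{Main obstacle.} The delicate point is bookkeeping rather than difficulty. One must fix a single player set (either the batch $B_t$ or the full dataset $D$) and confirm that the in-run convention assigns zero to points outside $B_t$. The constant term $C_t$ is nonzero and is not assigned to any player, so the Efficiency axiom for $\widetilde U_{\mathrm{Adam}}^{(t)}$ holds only relative to $U(\emptyset)=C_t$. I would note this explicitly so that no one expects Efficiency against the unnormalized utility.
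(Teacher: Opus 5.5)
Your proposal is correct and follows essentially the same route as the paper. You write the linearized utility as an $S$-independent constant plus $-\eta_t\langle\nabla\ell(w_t,z^{(\mathrm{val})}),J_t g_t(S)\rangle$, note that the marginal contribution of $z$ is the same for every coalition so the Shapley weights sum to one, and then sum over $t$ by Linearity with the null-player convention for $z\notin B_t$. The only difference is cosmetic: you first split the game via Linearity into a constant game and per-player indicator games, whereas the paper reads the coalition-independent marginal $-\eta_t\langle\nabla\ell(w_t,z^{(\mathrm{val})}),J_t\nabla\ell(w_t,z)\rangle$ directly off the affine form.
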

Theorem~\ref{thm:adam_sv_approx} shows that the Shapley value of the linearized Adam local utility recovers a ghost-compatible dot-product form.  Compared with SGD-based In-Run Shapley, the validation direction is now modified by the Adam update Jacobian $J_t$.  Equivalently, $\langle
\nabla \ell(w_t,z^{(\mathrm{val})}), J_t\nabla \ell(w_t,z) \rangle = \langle
J_t^\top\nabla \ell(w_t,z^{(\mathrm{val})}), \nabla \ell(w_t,z) \rangle .$
Thus, Adam-aware attribution replaces the raw validation gradient used by SGD-based In-Run Shapley with an optimizer-aware validation direction. 
This transformation is generally not a scalar rescaling, and therefore can change the ranking of training samples.

\vspace{-0.1in}\section{Experiments}\vspace{-0.1in}
\label{sec:exp}
In this section, we conduct a comprehensive empirical evaluation to demonstrate the effectiveness and efficiency of our Adam-aware In-Run Data Shapley from three aspects:
(1) \textbf{Approximation Fidelity}, we test whether the proposed estimator faithfully matches exact local Shapley values under the fixed-state Adam utility.
(2) \textbf{Computational Efficiency}, we evaluate the computational efficiency of the Linearized Ghost Approximation. 
(3) \textbf{Practical Effectiveness}, we study whether Adam-aware attribution is useful for downstream rank-based tasks, including semantic source identification and data pruning.

\begin{figure*}[t]
    \centering
    \begin{subfigure}[t]{0.32\linewidth}
        \centering
        \includegraphics[width=\linewidth]{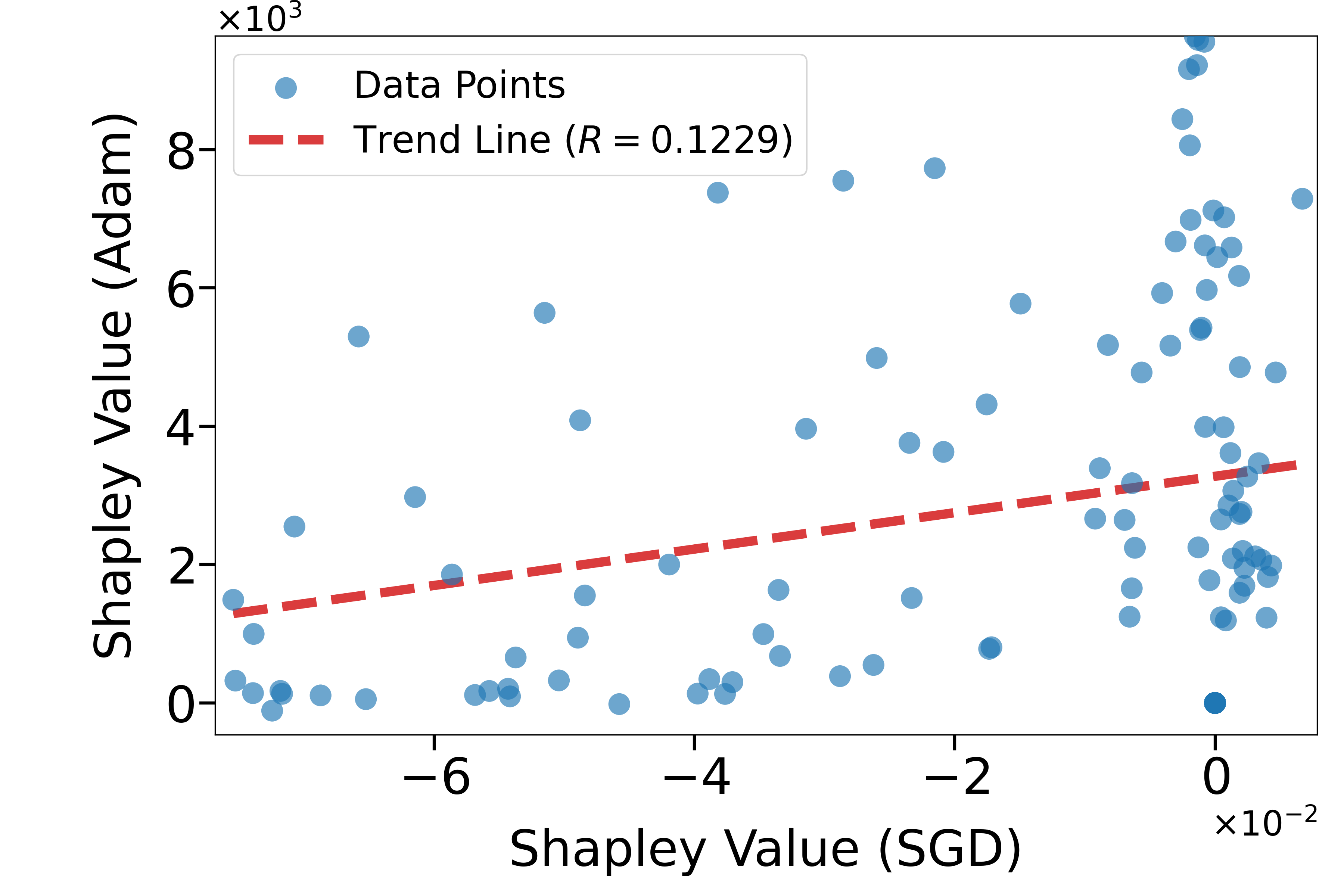}
        \caption{}
        \label{fig:ap_rq3}
    \end{subfigure}
    \hfill
    \begin{subfigure}[t]{0.32\linewidth}
        \centering
        \includegraphics[width=\linewidth]{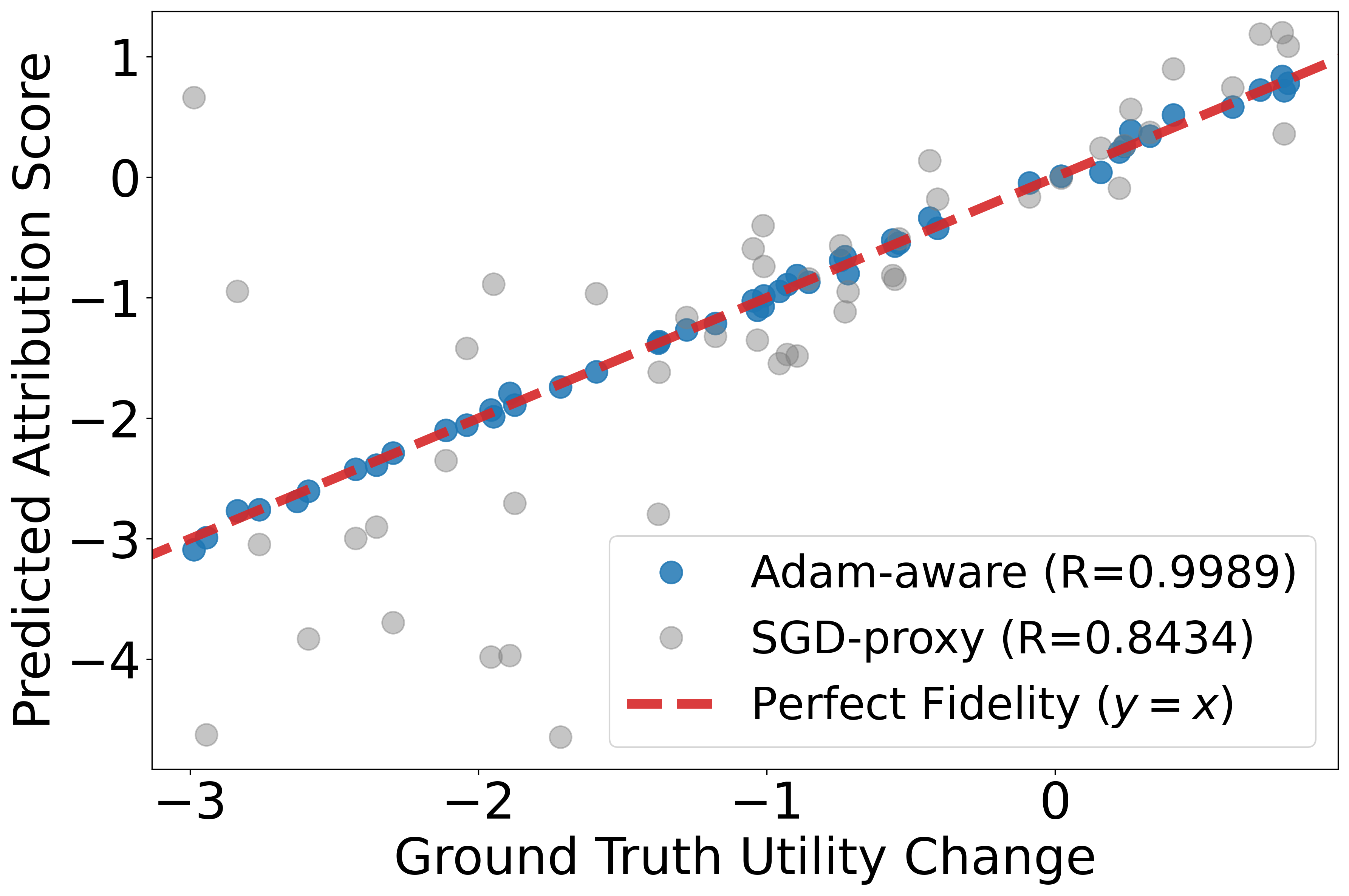}
        \caption{}
        \label{fig:ap_rq2}
    \end{subfigure}
    \hfill
    \begin{subfigure}[t]{0.32\linewidth}
        \centering
        \includegraphics[width=\linewidth]{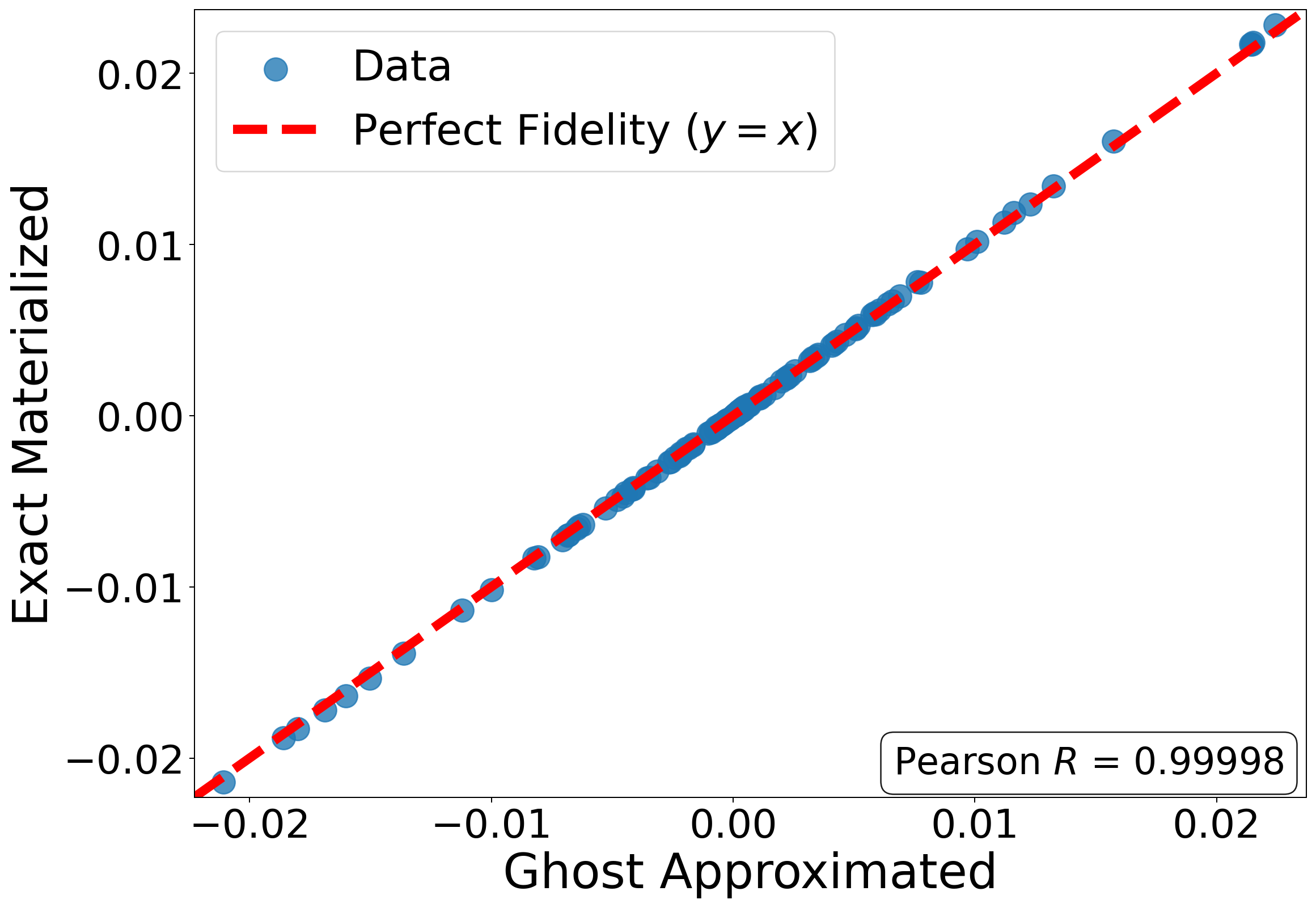}
        \caption{}
        \label{fig:ghost}
    \end{subfigure}

    \caption{
    \textbf{Optimizer dependence and fidelity.}
    {(a)} Accumulated SGD-proxy and Adam-aware scores differ along the same Adam trajectory.
    {(b)} Adam-aware scores better match exact local Shapley values than the SGD proxy ($R=0.9992$ vs. $0.8434$).
    {(c)} Linearized Ghost closely matches exact materialized Adam-aware scores ($R\approx0.999984$).
    }
    \label{fig:optimizer_shapley}
    \vspace{-0.1in}
\end{figure*}

\vspace{-0.1in}\subsection{Fidelity Comparison}\vspace{-0.1in}

In this subsection, we evaluate whether the proposed Adam-aware approximation faithfully captures the data value induced by Adam updates. 
We separate two levels of evaluation. 
First, we compare accumulated In-Run attribution scores computed along the same Adam trajectory using the SGD-based formula and the Adam-aware formula. 
Second, at a fixed training iteration, we compare both estimators against exact local Shapley values computed on the current mini-batch under the fixed-state Adam local utility. 
This distinction is important: the first protocol studies optimizer-induced differences in accumulated data values, while the second measures local approximation fidelity.

\noindent\textbf{Experimental Setup.}
We compare our \textbf{Adam-aware approximation} with the \textbf{SGD-based In-Run proxy}~\citep{wang2025data}. 
We consider three complementary protocols:
(i) comparing accumulated In-Run scores computed on the same Adam trajectory using the SGD-based and Adam-aware formulas;
(ii) measuring local fidelity against exact Shapley values computed for the current mini-batch at a fixed training iteration; and
(iii) evaluating robustness across learning rates $\eta\in[10^{-7},10^{-3}]$. 
For protocol (i), we report the correlation between accumulated SGD-proxy and Adam-aware scores. 
For protocols (ii) and (iii), we report Pearson correlation ($R$) between predicted local scores and exact local Shapley values.

\begin{wrapfigure}{r}{0.45\linewidth}
    \centering
    \includegraphics[width=0.8\linewidth]{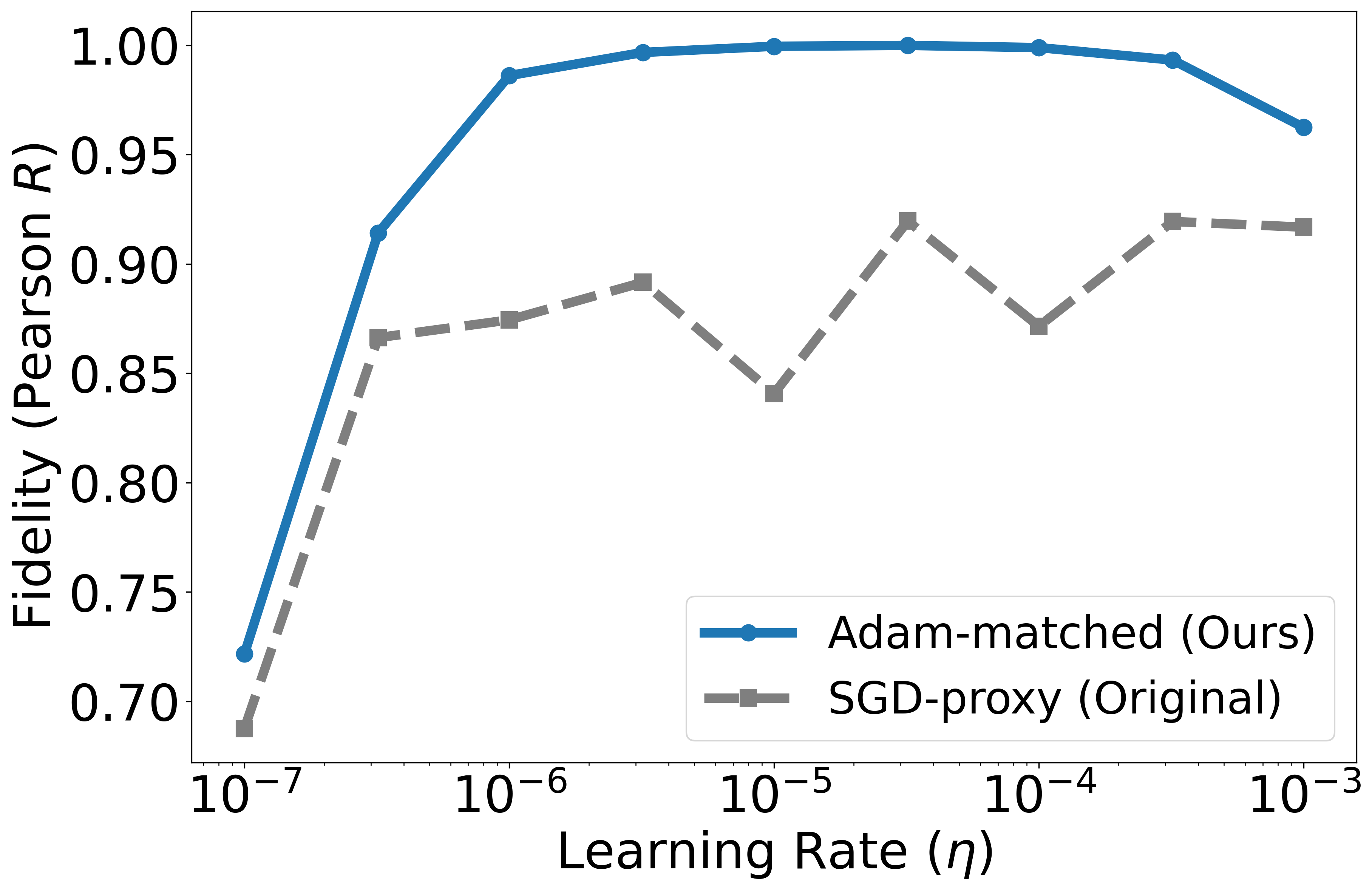}
    \caption{
    {Fidelity across learning rates.}
    }
    \vspace{-0.1in}
    \label{fig:rq2_fidelity_trend}
\end{wrapfigure}

\noindent\textbf{Optimizer Dependence in Accumulated Values.}
Figure~\ref{fig:ap_rq3} compares accumulated In-Run scores computed along the same Adam training trajectory using the SGD-based formula and the Adam-aware formula. 
The weak alignment indicates that replacing Adam's optimizer-induced update direction with the raw SGD gradient can substantially change the resulting data values. 
Thus, optimizer mismatch affects not only the numerical scale of attribution scores, but also their accumulated ranking across training.

\noindent\textbf{Fidelity to Exact Local Shapley Values.}
Figure~\ref{fig:ap_rq2} compares both methods against exact local Shapley values computed on the current mini-batch under the fixed-state Adam local utility. 
The Adam-aware approximation achieves near-perfect fidelity ($R=0.9992$), while the SGD-based proxy has lower correlation ($R=0.8434$). 
This shows that the Adam-aware update direction more accurately tracks the optimizer-induced local utility. 
The same trend holds across learning rates in Figure~\ref{fig:rq2_fidelity_trend}: Adam-aware attribution remains highly stable with $R>0.96$, whereas the SGD-proxy is less stable and degrades under some learning rates.

\noindent\textbf{Validity of Linearized Ghost.}
Finally, we evaluate whether the Linearized Ghost Approximation preserves the Adam-aware attribution signal. 
Figure~\ref{fig:ghost} compares ghost-approximated scores with exact materialized Adam-aware scores. 
The near-perfect agreement ($R\approx0.999984$) indicates that the ghost implementation introduces negligible additional approximation error relative to the explicit Adam-aware computation. 
Thus, Linearized Ghost provides a scalable implementation while preserving the fidelity of Adam-aware attribution.

\vspace{-0.1in}\subsection{Computational Efficiency via Linearized Ghost Approximation}
\label{sec:efficiency}\vspace{-0.1in}

A critical contribution of this work is the \textit{Linearized Ghost Approximation}, which makes Adam-aware attribution feasible without materializing per-sample gradients. In this subsection, we evaluate whether this approximation reduces the computational bottleneck of direct Adam-aware attribution.

\noindent\textbf{Experimental Setup.}
We benchmark runtime and memory on GPT-2 Small (124M parameters) trained with AdamW. 
All experiments are conducted on a single NVIDIA A100 (80GB) GPU, using batch size $16$ and sequence length $1024$. 
We compare three configurations: 
(i) \textbf{Standard AdamW}, training without attribution; 
(ii) \textbf{Adam-Ghost (Ours)}, our Adam-aware In-Run implementation with Linearized Ghost; and 
(iii) \textbf{Adam-Direct (Naive)}, which explicitly materializes per-sample gradients for Adam-aware attribution.

\noindent\textbf{Results.}
As shown in Table~\ref{tab:efficiency}, Adam-Ghost achieves $112.21$ samples/sec, retaining $43.9\%$ of the throughput of standard AdamW training. 
This shows that Adam-aware attribution introduces non-negligible overhead relative to ordinary training, mainly due to collecting attribution signals and applying the Adam-aware linearized weighting during training. 
However, Adam-Ghost is substantially more efficient than the naive Adam-Direct implementation, which achieves only $50.06$ samples/sec. 
Thus, Linearized Ghost improves throughput by $2.24\times$ over Adam-Direct by replacing explicit per-sample Adam-aware computations with ghost-style aggregation.

\begin{wraptable}{r}{0.50\linewidth}
\vspace{-1.0em}
\centering
\caption{\textbf{Efficiency on GPT-2 Small} ($BS=16$, $Seq=1024$). Adam-Ghost is more efficient than Adam-Direct.}
\label{tab:efficiency}
\vspace{-0.6em}
\footnotesize
\setlength{\tabcolsep}{4pt}
\begin{tabular}{lcc}
\toprule
\textbf{Method} & \textbf{SPS} & \textbf{Mem. (MB)} \\
\midrule
Standard & \textbf{255.62} & \textbf{1437.52} \\
Adam-Ghost & 112.21 & 3232.09 \\
Adam-Direct & 50.06 & 10540.37 \\
\bottomrule
\end{tabular}
\vspace{-0.8em}
\end{wraptable}

The memory results show the same trend. 
Adam-Ghost uses $3232.09$ MB of peak memory, which is higher than standard training ($1437.52$ MB) but much lower than Adam-Direct ($10540.37$ MB). 
This corresponds to a $69.3\%$ reduction in peak memory compared with Adam-Direct. 
Overall, these results show that Linearized Ghost does not eliminate all overhead relative to standard training, but it makes Adam-aware attribution substantially more practical than explicit per-sample computation.

\begin{table*}[t]
    \centering
    
    \begin{minipage}[t]{0.45\linewidth}
        \begin{tcolorbox}[
            colframe=ubBlue,
            colbacktitle=ubBlue,
            colback=white,
            title=\textbf{Original Wikipedia Corpus},
            arc=2mm, boxrule=1pt,
            fonttitle=\bfseries
        ]
        \small
        The arsenal was briefly seized once more by Joseph Brooks loyalists during the Brooks-Baxter War of 1874.
        \end{tcolorbox}
    \end{minipage}
    \hspace{0.3cm}
    \begin{minipage}[t]{0.45\linewidth}
        \begin{tcolorbox}[
            colframe=kaustOrange,
            colbacktitle=kaustOrange,
            colback=white,
            title=\textbf{Synthetic ``Similar topic" Query},
            arc=2mm, boxrule=1pt,
            fonttitle=\bfseries
        ]
        \small
        A historical dispute involved temporary takeover of a military installation by opposing forces.
        \end{tcolorbox}
    \end{minipage}
    
    \vspace{0.2cm}

    \resizebox{0.95\linewidth}{!}{
        \begin{tabular}{lcccc}
            \toprule
            \textbf{Scenario} 
            & \textbf{BM25} 
            & \textbf{IF-Proxy} 
            & \textbf{In-Run Data Shapley via SGD} 
            & \textbf{In-Run Data Shapley via Adam} \\
            \midrule
            Exact                 & \textbf{1.00} & 5.07    & 1.05    & 1.55 \\
            Partial               & \textbf{1.00} & 9.87    & 1.25    & 1.65 \\
            Paraphrase            & \textbf{1.21} & 245.41  & 9.48    & 6.51 \\
            Significant rewrite   & \textbf{2.05} & 216.23  & 454.60  & 32.10 \\
            Similar topic         & 2643.50       & 2855.13 & 1612.50 & \textbf{753.25} \\
            \bottomrule
        \end{tabular}
    }

    \caption{\textbf{Semantic Source Identification.}
\textbf{Top}: an original Wikipedia corpus and a synthetic ``Similar topic'' query.
\textbf{Bottom}: average rank of the true source among $10\,\mathrm{k}$ candidates; lower is better.
BM25 is a lexical baseline, and IF-Proxy is a post-training gradient-dot-product proxy.
Adam-based In-Run Data Shapley improves over SGD-based In-Run Data Shapley on challenging semantic variants, especially significant rewrite and similar topic.}
    \label{tab:semantic_robustness}
    \vspace{-0.2in}
\end{table*}

\vspace{-0.1in}\subsection{Practical Effectiveness}\vspace{-0.1in}
We evaluate the practical utility of our method on two rank-based data attribution tasks: semantic source identification and data pruning.

\vspace{-0.1in}\subsubsection{Semantic Source Identification}

\noindent\textbf{Experimental Setup.}
We first evaluate whether Adam-aware attribution can recover semantically related source examples from an Adam-trained language model. 
We train DistilGPT-2 (82M)~\citep{sanh2019distilbert,radford2019gpt2} on WikiText-2~\citep{merity2017pointer}. 
For efficiency, we use the first $10\,\mathrm{k}$ training lines ($\sim 350\,\mathrm{k}$ tokens) for training and reserve the remaining data for validation and testing. 
We compare four methods: BM25, IF-Proxy, SGD-aware In-Run Data Shapley, and Adam-aware In-Run Data Shapley.

\noindent\textbf{Semantic Perturbation Protocol and Evaluation Metric.}
For each trial, we randomly select a training example $z^{\ast}$ as the source sample and construct validation queries with increasing semantic distance:
(i) \emph{Exact},
(ii) \emph{Partial},
(iii) \emph{Paraphrase},
(iv) \emph{Significant rewrite}, and
(v) \emph{Similar topic}.
The paraphrase and significant-rewrite variants are generated by a paraphrasing model and filtered using lexical-overlap constraints. 
Each generated query is treated as a validation input $z^{(\mathrm{val})}$.
For each validation query $z^{(\mathrm{val})}$, we compute an attribution or retrieval score $\phi_i$ for every training sample $z_i$. 
We then rank all training samples and report the average rank of the true source sample $z^{\ast}$ across multiple queries and random trials. 
A lower rank indicates better source identification.

\noindent\textbf{Results and Analysis.}
Table~\ref{tab:semantic_robustness} reports the average rank of the true source sample. 
BM25 performs well when lexical overlap is preserved, especially in the exact, partial, and paraphrase settings, which is expected because BM25 directly relies on surface-form matching. 
Thus, BM25 should be viewed as a lexical retrieval baseline rather than an optimizer-aware attribution method.

The main comparison is between SGD-aware and Adam-aware In-Run Data Shapley. 
Adam-aware attribution performs better on the more semantically challenging variants. 
For paraphrased queries, it improves the average source rank from $9.48$ to $6.51$. 
Under significant rewriting, the improvement is much larger, from $454.60$ to $32.10$. 
For similar-topic queries, Adam-aware attribution further improves the rank from $1612.50$ to $753.25$. 
These results suggest that Adam-aware In-Run Data Shapley captures optimizer-dependent training signals that are missed by the SGD-style proxy.

The IF-Proxy baseline provides a post-training gradient-dot-product signal, but it does not account for the full training trajectory or Adam's adaptive moment states. 
As a result, it performs substantially worse than Adam-aware In-Run Data Shapley on the semantic variants. 
Overall, these results provide rank-based evidence that attribution under Adam training benefits from explicitly modeling Adam's optimizer dynamics. 
Additional results are provided in Appendix~\ref{app:semantic_1k}.


\vspace{-0.1in}\subsubsection{Data Pruning on SST-2}

\noindent\textbf{Experimental Setup.}
We evaluate whether optimizer-matched In-Run Shapley scores can guide data pruning. 
We train DistilBERT~\citep{sanh2019distilbert} on a subset of $10{,}000$ SST-2 training samples~\citep{socher2013recursive} using AdamW with learning rate $2\times10^{-5}$, batch size $16$, and $5$ epochs. 
During training, we compute Adam-aware In-Run Shapley scores with the Linearized Ghost Approximation. 
We then remove the top-ranked, bottom-ranked, or randomly selected $10\%$--$30\%$ training samples, retrain the model on the remaining data, and evaluate accuracy on the full validation set of $872$ examples. 
All results are averaged over three seeds. 
Random pruning serves as a budget-matched control, separating the effect of data selection from the effect of reducing the training set size.

To compare pruning behavior under another optimizer, we repeat the same protocol under SGD dynamics. 
Specifically, we compute SGD-aware In-Run Shapley scores and retrain DistilBERT using SGD with momentum $0.9$, learning rate $3\times10^{-4}$, and a linear warmup over $6\%$ of the total training steps. 
All other settings, including data splits and pruning ratios, are kept fixed.

\begin{table}[t]
\centering
\captionsetup{font=small}
\caption{
\textbf{Data pruning on SST-2 with DistilBERT.}
Validation accuracy after removing bottom-, random-, or top-ranked samples.
}
\label{tab:sst2_pruning}
\small
\setlength{\tabcolsep}{7pt}
\begin{tabular}{lccc ccc}
\toprule
& \multicolumn{3}{c}{\textbf{AdamW}} 
& \multicolumn{3}{c}{\textbf{SGD}} \\
\cmidrule(lr){2-4}\cmidrule(lr){5-7}
\textbf{Ratio} 
& \textbf{Bottom} & \textbf{Random} & \textbf{Top} 
& \textbf{Bottom} & \textbf{Random} & \textbf{Top} \\
\midrule
10\% & \textbf{0.8838} & 0.8713 & 0.8685 & \textbf{0.8392} & 0.8131 & 0.8119 \\
20\% & \textbf{0.8826} & 0.8752 & 0.8639 & \textbf{0.8228} & 0.7706 & 0.7649 \\
30\% & \textbf{0.8876} & 0.8739 & 0.8532 & \textbf{0.7117} & 0.6760 & 0.6383 \\
\bottomrule
\end{tabular}
\vspace{-0.2in}
\end{table}

\noindent\textbf{Results and Analysis.}
Table~\ref{tab:sst2_pruning} reports validation accuracy after pruning different fractions of the training set. 
Under AdamW, removing the \emph{bottom}-ranked samples according to Adam-aware In-Run Shapley consistently gives the best performance. 
At pruning ratios of $10\%$, $20\%$, and $30\%$, bottom pruning achieves $0.8838 / 0.8826 / 0.8876$ accuracy, outperforming random pruning ($0.8713 / 0.8752 / 0.8739$) and top pruning ($0.8685 / 0.8639 / 0.8532$). 
This shows that Adam-aware attribution identifies low-value samples whose removal is more beneficial than equal-budget random pruning.
Under SGD, bottom pruning also outperforms random and top pruning, but the overall performance is substantially lower and degrades more sharply as the pruning ratio increases. 
At $30\%$ pruning, even the best SGD condition drops to $0.7117$, while top pruning decreases to $0.6383$. 
Across both optimizers, top pruning is consistently harmful, which is expected because it removes samples assigned high positive contribution by the corresponding attribution score. 
Overall, these results show that In-Run Shapley scores provide meaningful rankings for data pruning. 
They also reinforce that data values should be interpreted relative to the optimizer and training trajectory under which they are computed.

\vspace{-0.1in}\section{Conclusion}\vspace{-0.1in}
\label{sec:conclusion}

In this work, we propose \emph{Adam-Aware In-Run Data Shapley}, an optimizer-consistent extension of In-Run Data Shapley for Adam training. 
Our analysis and experiments show that data values are coupled with the realized optimization trajectory, and that SGD-based in-run attribution can be misleading for Adam-trained models. 
By defining a fixed-state Adam local utility and introducing a Linearized Ghost Approximation, our method accounts for Adam's momentum and variance normalization while remaining substantially more efficient than direct per-sample computation. 
Experiments demonstrate near-perfect fidelity to explicit Adam marginal utility changes and improved performance on rank-based data attribution tasks, including semantic source identification and data pruning.

\section*{Acknowledgments}
We thank the anonymous ICML reviewers of an earlier version of this work for their thoughtful feedback, which helped improve the presentation, technical discussion, experimental interpretation, and related work coverage.

\bibliography{icml/conference}

@article{agarwal2017second,
  title={Second-order stochastic optimization for machine learning in linear time},
  author={Agarwal, Naman and Bullins, Brian and Hazan, Elad},
  journal={Journal of Machine Learning Research},
  volume={18},
  number={116},
  pages={1--40},
  year={2017}
}

@article{kwon2023datainf,
  title={Datainf: Efficiently estimating data influence in lora-tuned llms and diffusion models},
  author={Kwon, Yongchan and Wu, Eric and Wu, Kevin and Zou, James},
  journal={arXiv preprint arXiv:2310.00902},
  year={2023}
}

@article{zhou2024hyperinf,
  title={HyperINF: Unleashing the HyperPower of the Schulz's Method for Data Influence Estimation},
  author={Zhou, Xinyu and Fan, Simin and Jaggi, Martin},
  journal={arXiv preprint arXiv:2410.05090},
  year={2024}
}

@article{bae2024training,
  title={Training data attribution via approximate unrolled differentiation},
  author={Bae, Juhan and Lin, Wu and Lorraine, Jonathan and Grosse, Roger},
  journal={arXiv preprint arXiv:2405.12186},
  year={2024}
}

@article{ilyas2025magic,
  title={Magic: Near-optimal data attribution for deep learning},
  author={Ilyas, Andrew and Engstrom, Logan},
  journal={arXiv preprint arXiv:2504.16430},
  year={2025}
}

@article{gao2020pile,
  title={The pile: An 800gb dataset of diverse text for language modeling},
  author={Gao, Leo and Biderman, Stella and Black, Sid and Golding, Laurence and Hoppe, Travis and Foster, Charles and Phang, Jason and He, Horace and Thite, Anish and Nabeshima, Noa and others},
  journal={arXiv preprint arXiv:2101.00027},
  year={2020}
}

@article{loshchilov2017decoupled,
  title={Decoupled weight decay regularization},
  author={Loshchilov, Ilya and Hutter, Frank},
  journal={arXiv preprint arXiv:1711.05101},
  year={2017}
}

@article{kingma2014adam,
  title={Adam: A method for stochastic optimization},
  author={Kingma, Diederik P and Ba, Jimmy},
  journal={arXiv preprint arXiv:1412.6980},
  year={2014}
}

@article{raffel2020exploring,
  title={Exploring the limits of transfer learning with a unified text-to-text transformer},
  author={Raffel, Colin and Shazeer, Noam and Roberts, Adam and Lee, Katherine and Narang, Sharan and Matena, Michael and Zhou, Yanqi and Li, Wei and Liu, Peter J},
  journal={Journal of machine learning research},
  volume={21},
  number={140},
  pages={1--67},
  year={2020}
}

@article{DBLP:journals/corr/abs-2501-15963,
  author       = {Chenyang Ren and
                  Huanyi Xie and
                  Shu Yang and
                  Meng Ding and
                  Lijie Hu and
                  Di Wang},
  title        = {Evaluating Data Influence in Meta Learning},
  journal      = {CoRR},
  volume       = {abs/2501.15963},
  year         = {2025}
}

@article{DBLP:journals/corr/abs-2507-04059,
  author       = {Chenyang Ren and
                  Yifan Jia and
                  Huanyi Xie and
                  Zhaobin Xu and
                  Tianxing Wei and
                  Liangyu Wang and
                  Lijie Hu and
                  Di Wang},
  title        = {Attributing Data for Sharpness-Aware Minimization},
  journal      = {CoRR},
  volume       = {abs/2507.04059},
  year         = {2025}
}

@misc{hu2025dissectingrepresentationmisalignmentcontrastive,
      title={Dissecting Representation Misalignment in Contrastive Learning via Influence Function}, 
      author={Lijie Hu and Chenyang Ren and Huanyi Xie and Khouloud Saadi and Shu Yang and Zhen Tan and Jingfeng Zhang and Di Wang},
      year={2025},
      eprint={2411.11667},
      archivePrefix={arXiv},
      primaryClass={cs.LG},
      url={https://arxiv.org/abs/2411.11667}, 
}

@incollection{
shapley1953value,
title        = {A Value for n-Person Games},
author       = {Shapley, Lloyd S.},
booktitle    = {Contributions to the Theory of Games, Volume II},
editor       = {Kuhn, H. W. and Tucker, A. W.},
pages        = {307--317},
year         = {1953},
publisher    = {Princeton University Press},
address      = {Princeton, NJ}
}

@article{Jia2019EfficientTD,
  title={Efficient Task-Specific Data Valuation for Nearest Neighbor Algorithms},
  author={Ruoxi Jia and David Dao and Boxin Wang and Frances Ann Hubis and Nezihe Merve G{\"u}rel and Bo Li and Ce Zhang and Costas J. Spanos and Dawn Xiaodong Song},
  journal={ArXiv},
  year={2019},
  volume={abs/1908.08619},
  url={https://api.semanticscholar.org/CorpusID:263794198}
}

@inproceedings{
wang2025data,
title={Data Shapley in One Training Run},
author={Jiachen T. Wang and Prateek Mittal and Dawn Song and Ruoxi Jia},
booktitle={The Thirteenth International Conference on Learning Representations},
year={2025},
url={https://openreview.net/forum?id=HD6bWcj87Y}
}

@inproceedings{ghorbani2019data,
  title={Data Shapley: Equitable valuation of data for machine learning},
  author={Ghorbani, Amirata and Zou, James},
  booktitle={International Conference on Machine Learning},
  year={2019}
}

@inproceedings{jia2019towards,
  title={Towards efficient data valuation based on the shapley value},
  author={Jia, Ruoxi and Dao, David and Wang, Boxin and Hubis, Frances Ann and Hynes, Nick and G{\"u}rel, Nezihe Merve and Li, Bo and Zhang, Ce and Song, Dawn and Spanos, Costas J},
  booktitle={The 22nd international conference on artificial intelligence and statistics},
  pages={1167--1176},
  year={2019},
  organization={PMLR}
}

@article{pruthi2020estimating,
  title={Estimating training data influence by tracing gradient descent},
  author={Pruthi, Garima and Liu, Frederick and Kale, Satyen and Sundararajan, Mukund},
  journal={Advances in Neural Information Processing Systems},
  volume={33},
  pages={19920--19930},
  year={2020}
}

@article{basu2020influence,
  title={Influence functions in deep learning are fragile},
  author={Basu, Samyadeep and Pope, Philip and Feizi, Soheil},
  journal={arXiv preprint arXiv:2006.14651},
  year={2020}
}

@inproceedings{koh2017understanding,
  title={Understanding black-box predictions via influence functions},
  author={Koh, Pang Wei and Liang, Percy},
  booktitle={International conference on machine learning},
  pages={1885--1894},
  year={2017},
  organization={PMLR}
}

@article{paul2021deep,
  title={Deep learning on a data diet: Finding important examples early in training},
  author={Paul, Mansheej and Ganguli, Surya and Dziugaite, Gintare Karolina},
  journal={Advances in neural information processing systems},
  volume={34},
  pages={20596--20607},
  year={2021}
}

@article{kwon2021beta,
  title={Beta shapley: a unified and noise-reduced data valuation framework for machine learning},
  author={Kwon, Yongchan and Zou, James},
  journal={arXiv preprint arXiv:2110.14049},
  year={2021}
}

@inproceedings{wang2023data,
  title={Data banzhaf: A robust data valuation framework for machine learning},
  author={Wang, Jiachen T and Jia, Ruoxi},
  booktitle={International Conference on Artificial Intelligence and Statistics},
  pages={6388--6421},
  year={2023},
  organization={PMLR}
}

@article{li2023robust,
  title={Robust data valuation with weighted banzhaf values},
  author={Li, Weida and Yu, Yaoliang},
  journal={Advances in Neural Information Processing Systems},
  volume={36},
  pages={60349--60383},
  year={2023}
}

@inproceedings{ilyas2022datamodels,
  title = {Datamodels: Predicting Predictions from Training Data},
  author = {Andrew Ilyas and Sung Min Park and Logan Engstrom and Guillaume Leclerc and Aleksander Madry},
  booktitle = {International Conference on Machine Learning (ICML)},
  year = {2022}
}

@article{illes2019estimation,
  title={Estimation of the Shapley value by ergodic sampling},
  author={Ill{\'e}s, Ferenc and Ker{\'e}nyi, P{\'e}ter},
  journal={arXiv preprint arXiv:1906.05224},
  year={2019}
}

@inproceedings{okhrati2021multilinear,
  title={A multilinear sampling algorithm to estimate shapley values},
  author={Okhrati, Ramin and Lipani, Aldo},
  booktitle={2020 25th International Conference on Pattern Recognition (ICPR)},
  pages={7992--7999},
  year={2021},
  organization={IEEE}
}

@inproceedings{burgess2021approximating,
  title={Approximating the Shapley Value Using Stratified Empirical Bernstein Sampling.},
  author={Burgess, Mark Alexander and Chapman, Archie C},
  booktitle={IJCAI},
  pages={73--81},
  year={2021}
}

@inproceedings{lin2022measuring,
  title={Measuring the effect of training data on deep learning predictions via randomized experiments},
  author={Lin, Jinkun and Zhang, Anqi and L{\'e}cuyer, Mathias and Li, Jinyang and Panda, Aurojit and Sen, Siddhartha},
  booktitle={International Conference on Machine Learning},
  pages={13468--13504},
  year={2022},
  organization={PMLR}
}

@article{mitchell2022sampling,
  title={Sampling permutations for shapley value estimation},
  author={Mitchell, Rory and Cooper, Joshua and Frank, Eibe and Holmes, Geoffrey},
  journal={Journal of Machine Learning Research},
  volume={23},
  number={43},
  pages={1--46},
  year={2022}
}

@article{wang2023note,
  title={A Note on" Towards Efficient Data Valuation Based on the Shapley Value''},
  author={Wang, Jiachen T and Jia, Ruoxi},
  journal={arXiv preprint arXiv:2302.11431},
  year={2023}
}

@article{covert2024stochastic,
  title={Stochastic amortization: A unified approach to accelerate feature and data attribution},
  author={Covert, Ian and Kim, Chanwoo and Lee, Su-In and Zou, James Y and Hashimoto, Tatsunori B},
  journal={Advances in Neural Information Processing Systems},
  volume={37},
  pages={4374--4423},
  year={2024}
}

@article{cook1980characterizations,
  title={Characterizations of an empirical influence function for detecting influential cases in regression},
  author={Cook, R Dennis and Weisberg, Sanford},
  journal={Technometrics},
  volume={22},
  number={4},
  pages={495--508},
  year={1980},
  publisher={Taylor \& Francis}
}

@article{grosse2023studying,
  title={Studying large language model generalization with influence functions},
  author={Grosse, Roger and Bae, Juhan and Anil, Cem and Elhage, Nelson and Tamkin, Alex and Tajdini, Amirhossein and Steiner, Benoit and Li, Dustin and Durmus, Esin and Perez, Ethan and others},
  journal={arXiv preprint arXiv:2308.03296},
  year={2023}
}

@article{xu2021validation,
  title={Validation free and replication robust volume-based data valuation},
  author={Xu, Xinyi and Wu, Zhaoxuan and Foo, Chuan Sheng and Low, Bryan Kian Hsiang},
  journal={Advances in Neural Information Processing Systems},
  volume={34},
  pages={10837--10848},
  year={2021}
}

@inproceedings{wu2022davinz,
  title={Davinz: Data valuation using deep neural networks at initialization},
  author={Wu, Zhaoxuan and Shu, Yao and Low, Bryan Kian Hsiang},
  booktitle={International Conference on Machine Learning},
  pages={24150--24176},
  year={2022},
  organization={PMLR}
}

@article{radford2019gpt2,
  title={Language Models are Unsupervised Multitask Learners},
  author={Radford, Alec and Wu, Jeffrey and Child, Rewon and Luan, David and Amodei, Dario and Sutskever, Ilya},
  journal={OpenAI Technical Report},
  year={2019}
}

@article{sanh2019distilbert,
  title={DistilBERT, a distilled version of BERT: smaller, faster, cheaper and lighter},
  author={Sanh, Victor and Debut, Lysandre and Chaumond, Julien and Wolf, Thomas},
  journal={arXiv preprint arXiv:1910.01108},
  year={2019}
}

@article{merity2017pointer,
  title={Pointer Sentinel Mixture Models},
  author={Merity, Stephen and Xiong, Caiming and Bradbury, James and Socher, Richard},
  journal={arXiv preprint arXiv:1609.07843},
  year={2017}
}

@inproceedings{socher2013recursive,
  title={Recursive Deep Models for Semantic Compositionality Over a Sentiment Treebank},
  author={Socher, Richard and Perelygin, Alex and Wu, Jean and Chuang, Jason and Manning, Christopher D and Ng, Andrew and Potts, Christopher},
  booktitle={EMNLP},
  year={2013}
}
\bibliographystyle{plainnat}

\newpage
\appendix

\vspace{-0.1in}\section{In-Run Data Shapley under Adam}\label{app:adam}

\vspace{-0.1in}
\subsection{Proof of Lemma~\ref{lem:adam_sv}}

\begin{lemma}[Restatement of Lemma~\ref{lem:adam_sv}]
\label{lem:adam_sv_app}
For any iteration $t=0,\ldots,T-1$, under the fixed-state Adam local counterfactual, the first-order approximation of the Adam local utility is
\[
U_{\mathrm{Adam}}^{(t)}(S;z^{(\mathrm{val})})
\approx
-\eta_t
\left\langle
\nabla \ell(w_t,z^{(\mathrm{val})}),
\frac{m_t(S)}{\sqrt{v_t(S)}+\epsilon}
\right\rangle .
\]
\end{lemma}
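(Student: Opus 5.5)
The argument is a first-order Taylor expansion of the validation loss around the current iterate $w_t$, in the same way as the SGD case. The one new ingredient is that the displacement is now the Adam step rather than a gradient step. First, fix the realized state $(w_t,m_{t-1},v_{t-1})$ and a coalition $S\subseteq B_t$. Define the counterfactual displacement
\[
\Delta_t(S):=\widetilde w_{t+1}(S)-w_t=-\eta_t\,\frac{m_t(S)}{\sqrt{v_t(S)}+\epsilon},
\]
where $m_t(S)$ and $v_t(S)$ are the fixed-state moments built from the coalition gradient $g_t(S)$. Since $m_{t-1},v_{t-1}$ are held fixed, $\Delta_t(S)$ is a deterministic function of $S$ alone. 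Only this displacement enters the expansion, so the nonlinearity of $\Psi_t$ plays no role in this lemma.

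Second, assume $\ell(\cdot,z^{(\mathrm{val})})$ is twice continuously differentiable in a neighbourhood of $w_t$ that contains the segment $[w_t,\widetilde w_{t+1}(S)]$. Taylor's theorem with Lagrange (or integral) remainder then gives
\[
\ell(\widetilde w_{t+1}(S),z^{(\mathrm{val})})-\ell(w_t,z^{(\mathrm{val})})
=\left\langle \nabla\ell(w_t,z^{(\mathrm{val})}),\Delta_t(S)\right\rangle+R_t(S),
\]
with $|R_t(S)|\le \tfrac12 \sup_{\xi}\|\nabla^2\ell(\xi,z^{(\mathrm{val})})\|\,\|\Delta_t(S)\|^2$. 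Substituting $\Delta_t(S)$ into the inner product yields exactly the right-hand side of Lemma~\ref{lem:adam_sv}. The meaning of ``$\approx$'' is therefore that $R_t(S)$ is dropped.

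Third, I would justify that $R_t(S)$ is genuinely higher order. The main point is a uniform bound on the Adam direction. Coordinatewise, $\sqrt{v_t(S)}+\epsilon\ge \sqrt{\beta_2 v_{t-1}}+\epsilon\ge\epsilon>0$, so the direction is well defined and
\[
\|\Delta_t(S)\|\le \eta_t\,\|m_t(S)\|/\epsilon .
\]
A sharper bound also holds: each coordinate of $m_t/(\sqrt{v_t}+\epsilon)$ is bounded by a constant depending only on $\beta_1,\beta_2$ and the moment history, because $|(1-\beta_1)g|/\sqrt{(1-\beta_2)g^2}$ is bounded. Hence $\|\Delta_t(S)\|=O(\eta_t)$ uniformly over the finitely many $S\subseteq B_t$, and $R_t(S)=O(\eta_t^2)$ while the retained term is $O(\eta_t)$.

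The main obstacle is this last step. The statement is only an approximation, so the proof must say in what regime it is accurate. I would handle this by bounding the step uniformly and stating the result as an $O(\eta_t^2)$ error under a local Hessian bound, rather than attempting any exact identity.
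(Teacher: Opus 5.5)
Your proposal is correct and follows the same route as the paper's proof. Both take a first-order Taylor expansion of $\ell(\cdot,z^{(\mathrm{val})})$ at $w_t$, substitute the fixed-state Adam displacement $\widetilde w_{t+1}(S)-w_t=-\eta_t\, m_t(S)/(\sqrt{v_t(S)}+\epsilon)$, and drop the remainder $R_t(S)$. Your extra step goes beyond the paper, which drops $R_t(S)$ without quantifying it: you bound the Adam direction uniformly in $g$ and assume a local Hessian bound, which shows $R_t(S)=O(\eta_t^2)$.
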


\begin{proof}
By definition, the Adam local utility is
\[
U_{\mathrm{Adam}}^{(t)}(S;z^{(\mathrm{val})})
=
\ell(\widetilde w_{t+1}(S),z^{(\mathrm{val})})
-
\ell(w_t,z^{(\mathrm{val})}).
\]
Applying a first-order Taylor expansion of the validation loss around $w_t$ gives
\[
\ell(\widetilde w_{t+1}(S),z^{(\mathrm{val})})
-
\ell(w_t,z^{(\mathrm{val})})
=
\left\langle
\nabla \ell(w_t,z^{(\mathrm{val})}),
\widetilde w_{t+1}(S)-w_t
\right\rangle
+
R_t(S),
\]
where $R_t(S)$ contains the higher-order remainder terms. 
Under the fixed-state Adam local counterfactual, the previous moments are held fixed and the current coalition gradient induces
\[
\widetilde w_{t+1}(S)
=
w_t
-
\eta_t
\frac{m_t(S)}{\sqrt{v_t(S)}+\epsilon}.
\]
Therefore,
\[
\widetilde w_{t+1}(S)-w_t
=
-\eta_t
\frac{m_t(S)}{\sqrt{v_t(S)}+\epsilon}.
\]
Substituting this into the Taylor expansion and dropping the higher-order remainder yields
\[
U_{\mathrm{Adam}}^{(t)}(S;z^{(\mathrm{val})})
\approx
-\eta_t
\left\langle
\nabla \ell(w_t,z^{(\mathrm{val})}),
\frac{m_t(S)}{\sqrt{v_t(S)}+\epsilon}
\right\rangle .
\]
This completes the proof.
\end{proof}

\vspace{-0.1in}\subsection{Efficient Computation via Linearized Ghost Dot-Product}
In this subsection, we provide full details about the Ghost Dot Product for our In-Run Data Shapley for Adam.

\vspace{-0.1in}\subsubsection{Proof of \cref{thm:adam_sv_approx}}

\begin{theorem}[Restatement of Theorem~\ref{thm:adam_sv_approx}]
\label{thm:adam_sv_approx_app}
Let $\bar g_t$ be a fixed reference gradient and $J_t
:= \left. \frac{\partial \Psi_t(g)}{\partial g} \right|_{g=\bar g_t}.$
Under the Linearized Ghost Approximation $\Psi_t(g_t(S))
\approx \Psi_t(\bar g_t)+J_t\big(g_t(S)-\bar g_t\big),$
let $\widetilde U_{\mathrm{Adam}}^{(t)}$ denote the resulting linearized Adam local utility. Then, for any $z\in B_t$, the local In-Run Shapley value satisfies
$\phi_z\!\left(\widetilde U_{\mathrm{Adam}}^{(t)}\right)
=
-\eta_t
\left\langle
\nabla \ell(w_t,z^{(\mathrm{val})}),
J_t\nabla \ell(w_t,z)
\right\rangle .$
Therefore, the global Adam-aware In-Run Data Shapley value under the linearized utility
$\widetilde U_{\mathrm{Adam}}=\sum_{t=0}^{T-1}\widetilde U_{\mathrm{Adam}}^{(t)}$
is
\[
\phi_z(\widetilde U_{\mathrm{Adam}})
=
-\sum_{t:z\in B_t}
\eta_t
\left\langle
\nabla \ell(w_t,z^{(\mathrm{val})}),
J_t\nabla \ell(w_t,z)
\right\rangle .
\]
\end{theorem}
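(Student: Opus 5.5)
The plan is to write the linearized local utility explicitly as an affine function of the coalition gradient, split it into a constant part and an additive part, and then read off the Shapley value using the Null Player, Symmetry/Efficiency axioms together with the Linearity lemma. By Lemma~\ref{lem:adam_sv} and the Linearized Ghost Approximation, the linearized utility is
\[
\widetilde U_{\mathrm{Adam}}^{(t)}(S;z^{(\mathrm{val})})
= -\eta_t\big\langle \nabla \ell(w_t,z^{(\mathrm{val})}),\, \Psi_t(\bar g_t)-J_t\bar g_t\big\rangle
-\eta_t\big\langle \nabla \ell(w_t,z^{(\mathrm{val})}),\, J_t\, g_t(S)\big\rangle .
\]
Write this as $C_t + A_t(S)$, where $C_t$ does not depend on $S$. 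Since $g_t(S)=\sum_{z'\in S}\nabla\ell(w_t,z')$ and $J_t$ is a fixed matrix (because $\bar g_t$ is fixed, independent of $S$), we get $A_t(S)=\sum_{z'\in S} a_{z'}$ with $a_{z'} := -\eta_t\langle \nabla \ell(w_t,z^{(\mathrm{val})}), J_t\nabla\ell(w_t,z')\rangle$.

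Next, by the Linearity lemma, $\phi_z(\widetilde U^{(t)}_{\mathrm{Adam}})=\phi_z(C_t\mathbf 1)+\phi_z(A_t)$, where $C_t\mathbf 1$ is viewed as the constant game. For the constant game, every marginal contribution $U(S\cup\{z\})-U(S)$ equals $C_t-C_t=0$, so the Shapley value is zero directly from the defining formula.

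A subtlety arises if one insists on $U(\emptyset)$ being anchored, e.g.\ the convention that the empty coalition yields no update. Then the game is not literally constant at $\emptyset$, and $\widetilde U(\emptyset)$ under the linearization equals $C_t$ anyway, since $g_t(\emptyset)=0$. I will therefore define $\widetilde U$ by substituting the linearization for every $S$, including $\emptyset$, which is what the statement's definition of $\widetilde U_{\mathrm{Adam}}^{(t)}$ says. This step is the main point to handle carefully, but it resolves cleanly.

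For the additive game $A_t$, each marginal contribution is $A_t(S\cup\{z\})-A_t(S)=a_z$ for every $S\subseteq B_t\setminus\{z\}$. The weights in the Shapley formula sum to one: $\frac1N\sum_k\binom{N-1}{k-1}^{-1}\binom{N-1}{k-1}=1$, with $N=|B_t|$ here. Hence $\phi_z(A_t)=a_z$, which is the claimed local formula.

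Finally, for the global statement, I take $\widetilde U_{\mathrm{Adam}}=\sum_t\widetilde U^{(t)}_{\mathrm{Adam}}$ and apply the Linearity lemma again to obtain $\phi_z(\widetilde U_{\mathrm{Adam}})=\sum_t\phi_z(\widetilde U^{(t)}_{\mathrm{Adam}})$. For $t$ with $z\notin B_t$, the local game does not involve $z$, so it contributes zero, as in the In-Run decomposition recalled in the Background. Summing the local formula over $t$ with $z\in B_t$ gives the result.
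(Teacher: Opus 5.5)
Your proposal is correct and follows essentially the same route as the paper. Both write $\widetilde U_{\mathrm{Adam}}^{(t)}(S)$ as an $S$-independent constant plus $-\eta_t\langle \nabla \ell(w_t,z^{(\mathrm{val})}), J_t\, g_t(S)\rangle$, so every marginal contribution of $z$ equals $-\eta_t\langle\nabla \ell(w_t,z^{(\mathrm{val})}),J_t\nabla\ell(w_t,z)\rangle$; the Shapley weights sum to one, and linearity then sums over $t$. The only cosmetic difference is that you split off the constant game with the Linearity lemma, whereas the paper lets the constant $c_t$ cancel inside each marginal difference; your choice to define $\widetilde U_{\mathrm{Adam}}^{(t)}$ by substitution for every coalition, including $S=\emptyset$, is exactly the paper's implicit convention.
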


\begin{proof}
Fix an iteration $t$ and a validation point $z^{(\mathrm{val})}$. 
For any coalition $S\subseteq B_t$, define the coalition gradient
\[
g_t(S):=\sum_{z\in S}\nabla \ell(w_t,z).
\]
By Lemma~\ref{lem:adam_sv}, the first-order Adam local utility can be written as
\[
U_{\mathrm{Adam}}^{(t)}(S)
\approx
-\eta_t
\nabla \ell(w_t,z^{(\mathrm{val})})^\top
\Psi_t(g_t(S)),
\]
where
\[
\Psi_t(g)
:=
\frac{
\beta_1m_{t-1}+(1-\beta_1)g
}{
\sqrt{\beta_2v_{t-1}+(1-\beta_2)g^{\odot 2}}+\epsilon
}.
\]

Under the Linearized Ghost Approximation, we linearize $\Psi_t$ around a fixed reference gradient $\bar g_t$:
\[
\Psi_t(g)
\approx
\Psi_t(\bar g_t)+J_t(g-\bar g_t),
\qquad
J_t
:=
\left.
\frac{\partial \Psi_t(g)}{\partial g}
\right|_{g=\bar g_t}.
\]
Substituting this linearization into the Adam local utility defines the linearized local utility
\[
\widetilde U_{\mathrm{Adam}}^{(t)}(S)
:=
-\eta_t
\nabla \ell(w_t,z^{(\mathrm{val})})^\top
\left[
\Psi_t(\bar g_t)+J_t\big(g_t(S)-\bar g_t\big)
\right].
\]
Rearranging terms gives
\[
\widetilde U_{\mathrm{Adam}}^{(t)}(S)
=
c_t
-
\eta_t
\nabla \ell(w_t,z^{(\mathrm{val})})^\top
J_t g_t(S),
\]
where
\[
c_t
:=
-\eta_t
\nabla \ell(w_t,z^{(\mathrm{val})})^\top
\Psi_t(\bar g_t)
+
\eta_t
\nabla \ell(w_t,z^{(\mathrm{val})})^\top
J_t\bar g_t
\]
is independent of the coalition $S$.

Now consider any $z\in B_t$ and any coalition $S\subseteq B_t\setminus\{z\}$. 
Since
\[
g_t(S\cup\{z\})=g_t(S)+\nabla\ell(w_t,z),
\]
we have
\[
\begin{aligned}
\widetilde U_{\mathrm{Adam}}^{(t)}(S\cup\{z\})
-
\widetilde U_{\mathrm{Adam}}^{(t)}(S)
&=
-\eta_t
\nabla \ell(w_t,z^{(\mathrm{val})})^\top
J_t
\left(
g_t(S\cup\{z\})-g_t(S)
\right)  \\
&=
-\eta_t
\nabla \ell(w_t,z^{(\mathrm{val})})^\top
J_t
\nabla \ell(w_t,z).
\end{aligned}
\]
The right-hand side is independent of the coalition $S$. 
Therefore, when substituted into the Shapley definition, every marginal contribution term is the same, and the Shapley weights sum to one. Hence,
\[
\phi_z\!\left(\widetilde U_{\mathrm{Adam}}^{(t)}\right)
=
-\eta_t
\nabla \ell(w_t,z^{(\mathrm{val})})^\top
J_t
\nabla \ell(w_t,z).
\]

Finally, by the linearity of the Shapley value,
\[
\phi_z(\widetilde U_{\mathrm{Adam}})
=
\sum_{t=0}^{T-1}
\phi_z\!\left(\widetilde U_{\mathrm{Adam}}^{(t)}\right).
\]
Since $\phi_z(\widetilde U_{\mathrm{Adam}}^{(t)})=0$ whenever $z\notin B_t$, we obtain
\[
\phi_z(\widetilde U_{\mathrm{Adam}})
=
-\sum_{t:z\in B_t}
\eta_t
\left\langle
\nabla \ell(w_t,z^{(\mathrm{val})}),
J_t\nabla \ell(w_t,z)
\right\rangle .
\]
This completes the proof.
\end{proof}

\vspace{-0.1in}\subsubsection{Computation of $J_t$}

We provide the explicit form of the Jacobian used in the Linearized Ghost Approximation. 
Recall the Adam update map with respect to the coalition gradient $g$:
\[
\Psi_t(g)
:=
\frac{
\beta_1 m_{t-1}+(1-\beta_1)g
}{
\sqrt{\beta_2 v_{t-1}+(1-\beta_2)g^{\odot 2}}+\epsilon
},
\]
where all operations are elementwise. 
Since $\Psi_t$ is coordinatewise separable, its Jacobian with respect to $g$ is diagonal.

For the $j$-th coordinate, define
\[
a_j:=\beta_1(m_{t-1})_j,\qquad
b:=1-\beta_1,\qquad
c_j:=\beta_2(v_{t-1})_j,\qquad
d:=1-\beta_2 .
\]
Then
\[
\Psi_{t,j}(g_j)
=
\frac{a_j+b g_j}{\sqrt{c_j+d g_j^2}+\epsilon}.
\]
Differentiating with respect to $g_j$ gives
\[
\Psi_{t,j}'(g_j)
=
\frac{b}{\sqrt{c_j+d g_j^2}+\epsilon}
-
\frac{(a_j+b g_j)d g_j}
{
\sqrt{c_j+d g_j^2}
\left(\sqrt{c_j+d g_j^2}+\epsilon\right)^2
}.
\]

Therefore, for a fixed reference gradient $\bar g_t$, the Jacobian $J_t
:= \left. \frac{\partial \Psi_t(g)}{\partial g} \right|_{g=\bar g_t}$
is diagonal. 
Its $j$-th diagonal entry is
\[
(J_t)_{jj}
=
\frac{1-\beta_1}
{
\sqrt{\beta_2(v_{t-1})_j+(1-\beta_2)\bar g_{t,j}^2}+\epsilon
}
-
\frac{
\left(\beta_1(m_{t-1})_j+(1-\beta_1)\bar g_{t,j}\right)
(1-\beta_2)\bar g_{t,j}
}
{
\sqrt{\beta_2(v_{t-1})_j+(1-\beta_2)\bar g_{t,j}^2}
\left(
\sqrt{\beta_2(v_{t-1})_j+(1-\beta_2)\bar g_{t,j}^2}
+\epsilon
\right)^2
}.
\]

Equivalently, define the reference moments
\[
\bar m_t:=\beta_1m_{t-1}+(1-\beta_1)\bar g_t,
\qquad
\bar v_t:=\beta_2v_{t-1}+(1-\beta_2)\bar g_t^{\odot 2}.
\]
Then
\[
(J_t)_{jj}
=
\frac{1-\beta_1}{\sqrt{(\bar v_t)_j}+\epsilon}
-
\frac{
(\bar m_t)_j(1-\beta_2)\bar g_{t,j}
}{
\sqrt{(\bar v_t)_j}
\left(\sqrt{(\bar v_t)_j}+\epsilon\right)^2
}.
\]
In vector form,
\[
J_t
=
\operatorname{Diag}\!\left(
\frac{1-\beta_1}{\sqrt{\bar v_t}+\epsilon}
-
\frac{
\bar m_t\odot(1-\beta_2)\bar g_t
}{
\sqrt{\bar v_t}\odot(\sqrt{\bar v_t}+\epsilon)^2
}
\right),
\]
where all products, divisions, square roots, and powers are elementwise.

\section{Additional Experimental Setup for Section~\ref{sec:optimizer_dependence}}\label{sec:append_setup_opt}
\noindent\textbf{Experimental Setup.} We trained a GPT-2 Small model (124M parameters) using both SGD and AdamW optimizers. To estimate the TMC values, we sampled multiple random permutations of the training data and tracked the marginal contribution of 10,000 samples to the validation loss throughout the training process.

\vspace{-0.1in}\section{Additional Experiment: Data Pruning Effect}
\label{app:data_pruning}

\begin{figure}[t]
    \centering
    \includegraphics[width=0.72\linewidth]{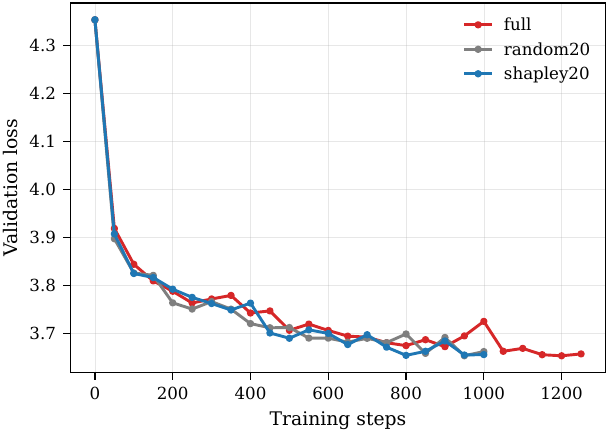}
    \caption{
    \textbf{Data pruning effect on language model training.}
    Validation loss as a function of training steps for three settings:
    training on the full dataset (\texttt{full}), random pruning of 20\%
    training samples (\texttt{random20}), and In-Run Shapley pruning of
    20\% training samples (\texttt{shapley20}). Lower validation loss is
    better.
    }
    \label{fig:pruning_curve}
\end{figure}

\noindent\textbf{Task.}
We evaluate the effect of data pruning on language model training dynamics.
The task is causal language modeling, and performance is measured by
validation loss during training.

\noindent\textbf{Experimental Setup.}
We train \texttt{DistilGPT2} on a fixed training corpus and evaluate
on a fixed validation set.
All runs use the same model architecture, optimizer (AdamW),
learning rate, batch size, number of epochs, and evaluation frequency.
Validation loss is logged periodically throughout training.

Three training settings are compared:
(1) training on the full dataset (\texttt{full}),
(2) training after randomly pruning 20\% of the training samples
(\texttt{random20}), and
(3) training after pruning 20\% of the training samples with the
lowest In-Run Shapley scores (\texttt{shapley20}).

\noindent\textbf{In-Run Shapley Pruning.}
In-Run Shapley scores are computed during a preliminary training pass
using the Adam-aware ghost approximation.
Samples with the lowest scores are removed prior to retraining.
Random pruning removes the same number of samples uniformly at random,
thereby serving as a budget-matched control that isolates the effect of
which samples are removed from the effect of reducing the training set size.

\noindent\textbf{Results.}
Figure~\ref{fig:pruning_curve} shows validation loss as a function of
training steps.
All three runs exhibit a similar overall convergence trend, indicating
that removing 20\% of the training samples does not destabilize training.
Compared with the full-data baseline, the Shapley-pruned run achieves
lower validation loss over much of the middle and later stages of training.
Compared with random pruning, Shapley pruning remains competitive and
often attains lower validation loss under the same pruning budget.
These results suggest that Adam-aware In-Run Shapley identifies samples
whose removal is at least as effective as random removal and can improve
the training trajectory without increasing the pruning ratio.

\vspace{-0.1in}\subsection{SST-2 Pruning Accuracy Statistics}
\label{app:sst2_pruning_details}

\noindent\textbf{Task.}
We further report detailed statistics for the SST-2 pruning experiment
used in the main paper.
The task is binary sentiment classification on SST-2 using \texttt{DistilBERT}.
Performance is measured by validation accuracy.

\noindent\textbf{Experimental Setup.}
We train \texttt{DistilBERT} on a subset of 10,000 SST-2 training examples.
For AdamW pruning, we compare four settings:
training on the full dataset, bottom pruning, random pruning, and top pruning.
Bottom pruning removes samples with the lowest Adam-aware In-Run Shapley
scores, while top pruning removes samples with the highest scores.
Random pruning removes the same fraction of samples uniformly at random.

For bottom and top pruning, results are averaged over three seeds.
For random pruning, we use five random pruning repeats for each seed,
resulting in fifteen runs in total.
The full-data baseline is averaged over three seeds.

\begin{table}[h]
\centering
\caption{
\textbf{Extended AdamW pruning statistics on SST-2.}
We report validation accuracy as mean $\pm$ standard deviation.
The full-data baseline is included for reference.
}
\label{tab:app_sst2_pruning_adam}
\resizebox{0.75\linewidth}{!}{
\begin{tabular}{l|c|c|c}
\toprule
\textbf{Setting} & \textbf{Prune Ratio} & \textbf{Validation Accuracy} & \textbf{\# Runs} \\
\midrule
Full data & 0\% & $0.8719 \pm 0.0058$ & 3 \\
\midrule
Bottom pruning & 10\% & $\mathbf{0.8838 \pm 0.0052}$ & 3 \\
Random pruning & 10\% & $0.8713 \pm 0.0092$ & 15 \\
Top pruning & 10\% & $0.8685 \pm 0.0046$ & 3 \\
\midrule
Bottom pruning & 20\% & $\mathbf{0.8826 \pm 0.0074}$ & 3 \\
Random pruning & 20\% & $0.8752 \pm 0.0075$ & 15 \\
Top pruning & 20\% & $0.8639 \pm 0.0046$ & 3 \\
\midrule
Bottom pruning & 30\% & $\mathbf{0.8876 \pm 0.0061}$ & 3 \\
Random pruning & 30\% & $0.8739 \pm 0.0069$ & 15 \\
Top pruning & 30\% & $0.8532 \pm 0.0140$ & 3 \\
\bottomrule
\end{tabular}
}
\end{table}

\noindent\textbf{Results.}
Table~\ref{tab:app_sst2_pruning_adam} provides the detailed statistics
behind the pruning results reported in the main paper.
The full-data AdamW baseline achieves $0.8719 \pm 0.0058$ validation
accuracy.
Across all pruning ratios, bottom pruning consistently outperforms both
random pruning and top pruning.
At pruning ratios of 10\%, 20\%, and 30\%, bottom pruning achieves
$0.8838$, $0.8826$, and $0.8876$ validation accuracy, respectively,
all exceeding the full-data baseline.
This suggests that the lowest-scored samples identified by Adam-aware
In-Run Shapley are not merely redundant, but can actively reduce
generalization performance when retained.

In contrast, top pruning consistently hurts performance, especially at
higher pruning ratios.
At 30\% pruning, top pruning drops to $0.8532 \pm 0.0140$,
substantially below both bottom pruning and random pruning.
This supports the interpretation that high-score samples are beneficial
for training and should not be removed.
\vspace{-0.1in}\section{Additional Semantic Source Identification Results}
\label{app:semantic_1k}

\noindent\textbf{Small Candidate-Pool Setting.}
In addition to the main semantic source identification experiment with
$10\,\mathrm{k}$ candidate training examples, we also evaluate a smaller
candidate-pool setting with $1\,\mathrm{k}$ candidates.
This experiment serves as a sanity check for the ranking protocol under
a lower retrieval difficulty.
As in the main experiment, lower rank indicates better identification of
the true source example.

\begin{table}[h]
\centering
\caption{
\textbf{Semantic source identification with $1\,\mathrm{k}$ candidate
training examples.}
We report the average rank of the true source sample $z^{\ast}$.
Lower is better. BM25 is a lexical retrieval baseline, IF-Proxy is a
post-training gradient-dot-product influence proxy, and SGD-InRun /
Adam-InRun compare optimizer-aware in-run attribution.
}
\label{tab:semantic_robustness_1k}
\resizebox{0.75\linewidth}{!}{
\begin{tabular}{lcccc}
\toprule
\textbf{Scenario}
& \textbf{BM25}
& \textbf{IF-Proxy}
& \textbf{SGD-InRun}
& \textbf{Adam-InRun} \\
\midrule
Exact               & \textbf{1.00} & \textbf{1.00} & \textbf{1.00} & \textbf{1.00} \\
Partial             & \textbf{1.00} & 4.00  & 2.00  & 1.50 \\
Paraphrase          & \textbf{1.00} & 8.00  & 1.75  & 4.50 \\
Significant rewrite & \textbf{1.00} & 56.25 & 13.75 & 4.75 \\
Similar topic       & 618.00        & 658.75 & 302.00 & \textbf{81.50} \\
\bottomrule
\end{tabular}
}
\end{table}

\noindent\textbf{Discussion.}
Table~\ref{tab:semantic_robustness_1k} shows that the $1\,\mathrm{k}$
candidate-pool setting is substantially easier than the main
$10\,\mathrm{k}$ setting.
For exact, partial, paraphrase, and significant-rewrite variants, BM25
often ranks the true source at or near the top because lexical overlap
remains highly informative in the smaller candidate pool.
This confirms that BM25 should be interpreted as a lexical retrieval
baseline rather than as an optimizer-aware attribution method.

The optimizer-aware comparison is more informative in the semantically
challenging variants.
Under significant rewriting, Adam-InRun improves the average rank over
SGD-InRun from $13.75$ to $4.75$.
Under similar-topic queries, where surface-form matching becomes much
less reliable, Adam-InRun improves the average rank from $302.00$ to
$81.50$.
These results are consistent with the main $10\,\mathrm{k}$ experiment
and provide additional evidence that Adam-aware in-run attribution better
captures optimizer-dependent semantic influence than the SGD-style proxy.
\vspace{-0.1in}\section{Semantic Variant Generation via Prompt Engineering}
\label{sec:semantic_variant_generation}

To probe robustness under controlled semantic perturbations,
we generate structured validation variants conditioned on a source training sample $z^{\ast}$.
Each $z^{\ast}$ is associated with four categories of validation texts:
(i) \texttt{partial\_same}, (ii) \texttt{paraphrase},
(iii) \texttt{significant\_paraphrase}, and (iv) \texttt{similar\_topic}.
These categories are designed to progressively decouple surface form, entity usage,
and semantic content, enabling fine-grained evaluation of attribution stability.

\noindent\textbf{Structured Prompt-Based Generation.}
We employ a large language model (DeepSeek-V3) as a controllable text generator.
To ensure structural consistency and downstream reproducibility,
all generations are constrained to a \texttt{json\_object} response format.
The prompt consists of a fixed system instruction and a parameterized user template,
where the source sample $z^{\ast}$ is explicitly injected as a variable.

\noindent\textbf{System prompt.}
\begin{quote}\small
You are generating controlled text variants for a machine learning experiment.
Return ONLY valid JSON. Do NOT include markdown, explanations, or extra text.
\end{quote}

\noindent\textbf{User prompt template.}
\begin{quote}\small
Given the following text $z^{\ast}$, generate text variants in four categories:
\texttt{partial\_same}, \texttt{paraphrase},
\texttt{significant\_paraphrase}, and \texttt{similar\_topic}.
Each output must be a single English sentence and satisfy category-specific constraints.
The model must return a JSON object with exactly these four keys.
\end{quote}

The full \emph{verbatim} user prompt, including category definitions,
forbidden-token injection, and formatting constraints,
is provided in Appendix~\ref{app:semantic_source_id_prompt} for exact reproducibility.

\vspace{-0.1in}\subsection{Deterministic Filtering and Quality Control}
\label{sec:semantic_filtering}

Raw generations from the language model are further processed through a deterministic
multi-stage filtering pipeline to remove trivial, degenerate, or overly lexical variants.
This ensures that robustness evaluation reflects genuine semantic perturbations rather
than surface-level artifacts.

\noindent\textbf{Lexical Constraint Enforcement.}
We extract a set of forbidden tokens
$\mathcal{T}_{\mathrm{forbid}}$ from $z^{\ast}$,
including named entities, alphanumeric identifiers, and long high-entropy content words.
Candidates in the \texttt{significant\_paraphrase} and \texttt{similar\_topic} categories
must satisfy
\[
\mathcal{W}(C) \cap \mathcal{T}_{\mathrm{forbid}} = \emptyset,
\]
where $\mathcal{W}(C)$ denotes the word set of candidate $C$.
This constraint prevents entity memorization and enforces semantic abstraction.

\noindent\textbf{Jaccard Similarity Thresholding.}
We measure lexical overlap using word-level Jaccard similarity
$J(C, z^{\ast})$.
Each category enforces a distinct overlap regime:
\begin{itemize}
    \item \texttt{Paraphrase:}
    $\tau_{p,\min} \le J(C, z^{\ast}) \le \tau_{p,\max}$,
    ensuring semantic equivalence with moderate surface variation.
    \item \texttt{Significant paraphrase:}
    $J(C, z^{\ast}) \le \tau_{s,\max}$,
    enforcing substantial rewriting while preserving meaning.
    \item \texttt{Similar topic:}
    an even stricter upper bound on $J(C, z^{\ast})$,
    encouraging topical relatedness without semantic identity.
\end{itemize}

\noindent\textbf{Final Selection.}
After filtering, we deduplicate candidates and select a fixed number
($K_{\mathrm{each}}$) per category.
All randomness is controlled by a fixed global seed, and generated scenarios
are cached to disk to guarantee deterministic reuse across runs.

\vspace{-0.1in}\subsection{Verbatim Prompt for Semantic Variant Generation}
\label{app:semantic_source_id_prompt}

\noindent\textbf{System prompt:}
\begin{quote}\small\ttfamily
You are generating controlled text variants for a machine learning experiment.
Return ONLY valid JSON. Do NOT include markdown, explanations, or extra text.
\end{quote}

\noindent\textbf{User prompt:}
\begin{quote}
\begin{verbatim}
Given the following text z_star, generate text variants in four categories.

Return a JSON object with EXACTLY the following keys:
- "partial_same"
- "paraphrase"
- "significant_paraphrase"
- "similar_topic"

Each value must be a list of strings.

Constraints:
- Each string must be ONE English sentence.
- Do not copy z_star verbatim except for "partial_same".
- "partial_same": truncated or lightly edited version of z_star (1 item).
- "paraphrase": same meaning, mild rewriting (at least K candidates).
- "significant_paraphrase": same meaning, substantial rewriting,
avoid reusing entities.
- "similar_topic": same general topic, different facts/details,
minimal lexical overlap.

Forbidden tokens:
[FORBIDDEN_TOKEN_LIST]

z_star:
[SOURCE TEXT]
\end{verbatim}
\end{quote}

\section{Limitations and Broader Impacts}

\paragraph{Limitations.}
Our method targets the step-wise In-Run Data Shapley utility rather than full unrolled final-checkpoint influence. 
It relies on first-order Taylor approximations and a linearization of the Adam update map around a reference gradient. 
The empirical evaluation focuses on language modeling and text classification tasks with AdamW/SGD; extending the study to larger pretraining runs and additional optimizers is an important direction.

\paragraph{Broader impacts.}
Optimizer-aware attribution can support data debugging, source tracing, and data curation, which may help reduce harmful or low-quality training data. 
However, attribution scores can be noisy or misinterpreted, and using them for automated data removal may unintentionally discard rare or underrepresented examples. 
In high-stakes settings, such scores should be used as diagnostic evidence rather than as the sole basis for decisions.

\end{document}